\algrenewcommand\alglinenumber[1]{\tiny #1:}
\algrenewcommand\algorithmicrequire{\textbf{Input:}}
\algrenewcommand\algorithmicensure{\textbf{Output:}}
\newcommand{\NA}{\texttt{NA}}
\setlist{nosep}
\newcommand{\IfRestatedTF}[2]{\ifthmt@thisistheone #2\else #1\fi}
\theoremstyle{plain}
\newtheorem{theorem}{Theorem}
\newtheorem{proposition}[theorem]{Proposition}
\newtheorem{lemma}[theorem]{Lemma}
\newtheorem{corollary}[theorem]{Corollary}
\newtheorem{remark}[theorem]{Remark}
\theoremstyle{definition}
\newcommand{\mdoubleplus}{\ensuremath\mathbin{+\mkern-10mu+}}
\begin{document}

\twocolumn[

\aistatstitle{Model Evaluation in the Dark: Robust Classifier Metrics with Missing Labels}

\aistatsauthor{ Danial Dervovic \And Michael Cashmore }

\aistatsaddress{ J.P. Morgan AI Research \\
315 Argyle Street, Glasgow, UK\\
\texttt{danial.dervovic@jpmorgan.com} 
\And J.P. Morgan AI Research \\
315 Argyle Street, Glasgow, UK\\
\texttt{michael.cashmore@jpmorgan.com}
}
]
\begin{abstract}
Missing data in supervised learning is well-studied, but the specific issue of \emph{missing labels} during model evaluation has been overlooked. Ignoring samples with missing values, a common solution, can introduce bias, especially when data is Missing Not At Random (MNAR). We propose a multiple imputation technique for evaluating classifiers using metrics such as precision, recall, and ROC-AUC. This method not only offers point estimates but also a predictive distribution for these quantities when labels are missing. We empirically show that the predictive distribution's location and shape are generally correct, even in the MNAR regime. Moreover, we establish that this distribution is approximately Gaussian and provide finite-sample convergence bounds. Additionally, a robustness proof is presented, confirming the validity of the approximation under a realistic error model.
\end{abstract}

\section{INTRODUCTION}
Missing data, the absence of values in a dataset, remains a persistent challenge in Machine Learning (ML). 
It introduces bias, hampers generalisation, and ultimately diminishes a model's predictive performance.

Multiple mechanisms can lead to data being missing~\citep{rubin1976inference}, most commonly Missing Completely at Random (MCAR), Missing at Random (MAR) and Missing Not at Random (MNAR).
MCAR\vfill\eject corresponds to the causes of the missing data, or ``missingness'', being completely unrelated to the data itself. 
MAR means missingness is related to the observed data but not the missing data.
If neither MCAR nor MAR hold, we have MNAR where  missingness is related to the unobserved data.
These mechanisms can be formalised using graphical models, as in Figure~\ref{fig:missing_data}.
Deducing whether data from a particular source are missing due to MCAR, MAR or MNAR is difficult without access to the data-generating mechanism~\citep{DHAULTFOEUILLE20101}.


\begin{figure}[h]
    \centering
    \begin{tikzpicture}[node distance=1cm, auto, scale=0.58]
    \node (X1) at (0,0) {$X$};
    \node (Y1) [right of=X1] {$Y$};
    \node (Ystar1) [right of=Y1] {$Y^*$};
    \node (M1) [below of=Y1, node distance=0.8cm] {$M$};

    \draw[->] (X1) -- (Y1);
    \draw[->] (Y1) -- (Ystar1);
    \draw[->] (M1) -- (Ystar1);

    \node at (1.75,1) {\textsf{MCAR}};

    \node (X2) at (5,0) {$X$};
    \node (Y2) [right of=X2] {$Y$};
    \node (Ystar2) [right of=Y2] {$Y^*$};
    \node (M2) [below of=Y2, node distance=0.8cm] {$M$};

    \draw[->] (X2) -- (Y2);
    \draw[->] (Y2) -- (Ystar2);
    \draw[->] (X2) -- (M2);
    \draw[->] (M2) -- (Ystar2);

    \node at (6.75,1) {\textsf{MAR}};

    \node (X3) at (10,0) {$X$};
    \node (Y3) [right of=X3] {$Y$};
    \node (Ystar3) [right of=Y3] {$Y^*$};
    \node (M3) [below of=Y3, node distance=0.8cm] {$M$};

    \draw[->] (X3) -- (Y3);
    \draw[->] (Y3) -- (Ystar3);
    \draw[->] (M3) -- (Ystar3);
    \draw[->] (Y3) -- (M3);

    \node at (11.75,1) {\textsf{MNAR}};
    \end{tikzpicture}
    \caption{Illustration of missingness mechanisms in relation to this work, with arrows indicating causation. $X \in \mathcal{X}$ are input features, $Y \in \mathcal{Y}$ is the label, $M \in \{0, 1\}$ is the missingness flag and $Y^* \in \mathcal{Y} \cup \{\NA\}$ is the masked label.}
    \label{fig:missing_data}
\end{figure}

There are numerous approaches to remove or impute missing values, such as ignoring samples with missing values, known as listwise deletion~\citep{allison2001missing}; imputing a random, static, or statistically derived value~\citep{donders2006gentle}; or training another model to impute a predicted value~\citep{little2019statistical,song2007missing}. Without careful handling of the missing data, the predictive power of ML models can be reduced~\citep{ayilara2019impact}.

This work considers another issue for ML in the presence of missing data, which is \textit{evaluating} models, specifically when the missing values are ground-truth labels at test-time.
Unless labels are MCAR then any technique to ignore or impute missing values can lead to a biased estimate~\citep{williams2015missing}.
This is an issue that can appear in model monitoring contexts~\citep{Krishnaram2022}, when there is a delay in receiving labels from a production ML system.
For example, in evaluating a fraud detection model we may receive some ground-truth labels before others; or some users may delay providing feedback to a recommender system.

In particular, for the setting where some fraction of labels are missing at test-time and the task is evaluating the performance of a binary classifier, we provide algorithms -- Performance Estimation by Multiple Imputation (PEMI), in Algorithm~\ref{alg:PEMI}, and PEMI-Gauss in Algorithm~\ref{alg:PEMI-Gauss} -- for correctly describing a decision-maker's ignorance about the value of various performance measures.
Notably, these include precision, recall and ROC-AUC amongst others.
To the authors' knowledge this work is the first to tackle missing labels and ML model evaluation simultaneously.

The description of the ignorance of a performance measure's value is naturally given as a probability distribution.
We rigorously prove finite-sample convergence bounds to a normal distribution in Theorem~\ref{thm:ratio_sum_bernoulli}.
The algorithms are further proven in Theorem~\ref{thm:robustness} to be immune to a realistic amount of noise in the imputation process.
As an intermediate step, we prove finite-sample convergence bounds of correlated Gaussian random variables to a Gaussian distribution (Theorem~\ref{thm:ratio_gauss}), which may be of independent interest.
Experiments on real datasets in Section~\ref{sec:appendix_expt} show that PEMI and PEMI-Gauss are effective over a range of missingness mechanisms when even a large amount of labels (30\%) are missing.


\paragraph{Related Work.}

Missing data has been considered in breadth and depth within the statistical literature, primarily in the context of inference~\citep{JosseReiter2018, little2019statistical}.
Approaches fall broadly into two camps: likelihood-based and imputation-based.
Likelihood-based methods, such as those for binomial regression~\citep{Ibrahim1996} or Generalised Linear Models~\citep{Ibrahim2001} use Expectation Maximisation (EM)~\citep{dempster1977maximum}.

Multiple Imputation is the most common imputation-based approach within the ML context~\citep{BertsimasMultipleImp2018}, proposing to replace the missing covariates with a function of the non-missing data. This is done several times non-deterministically to account for variance.
Within supervised learning,~\citet{josse2024consistencysupervisedlearningmissing, BertsimasMultipleImp2018} examine so-called Impute-then-Regress methods.
The work by~\citet{MorvanNeurIPS21} generalises these results, showing that Impute-then-Regress
procedures are asymptotically Bayes consistent for all missing data mechanisms and almost all imputation functions, regardless of the distribution of $\langle X, Y \rangle$ and the number of missing covariates.
We note that these works assume only the covariates $X$ have missing data, whereas we are focused on missing labels $Y$.
These asymptotic, general results are supplemented by works~\citep{ayme2024randomfeaturesmodelsway,lobo2024harnessingpatternbypatternlinearclassifiers} that specialise to specific models and missingness mechanisms to obtain sharpened bounds.

Focusing on missing labels, Semi-supervised Learning (SSL)~\citep{Scudder1965} leverages unlabelled alongside labelled data~\citep{chapelle2006} to improve performance on supervised learning tasks. 
Work on SSL with awareness of the causal diagram of missingness mechanisms is in its infancy. 
Notable examples include the doubly robust methods of~\citet{DBLP:conf/iclr/HuNM0Z22} and~\citet{Sportisse2023}, where they attempt direct modelling of the missingness mechanism.
The SSL literature generally assumes access to a fully labelled test dataset, whereas we are specifically interested in the distribution over evaluation metrics induced by the uncertainty over missing test labels. 

Active Learning~\citep{settles.tr09, hino2020active} considers the scenario where there are few labels and the decision-maker simultaneously learns a model and a sample-efficient policy for querying an oracle to gain more labels for learning.
\cite{mishler2023active} consider active learning in the MNAR setting for training rather than evaluation.

Within the Computer Vision community, there is work on accounting for missing test labels on tasks that are not classification.
For example,  ranking models on unknown domains~\citep{Sun2021} and for domain adaptive instance segmentation~\citep{GUAN2024107204}.

\citet{Amoukou2024} provide estimated quantiles of an ML model's loss function at test-time when labels are absent, towards solving the Sequential Harmful Shift Detection problem.
These estimates are used in a sequential testing framework to alert when the test distribution differs from the training distribution.

To the authors' knowledge this work is the first focusing specifically on the predictive distribution for classification metrics in the presence of missing labels.

\section{PROBLEM SETUP}\label{sec:problem}
 


\begin{figure*}[!ht]
    \centering
    \includegraphics[width=0.3\textwidth]{./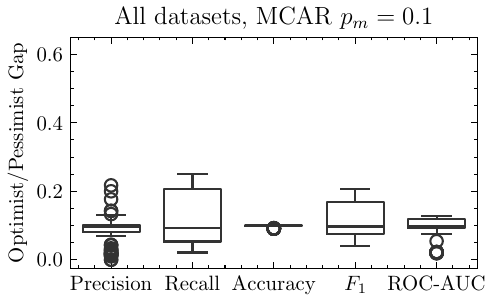}
    \includegraphics[width=0.3\textwidth]{./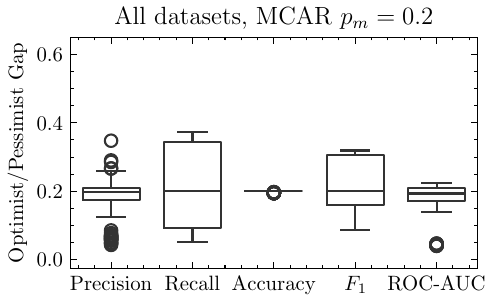}
    \includegraphics[width=0.3\textwidth]{./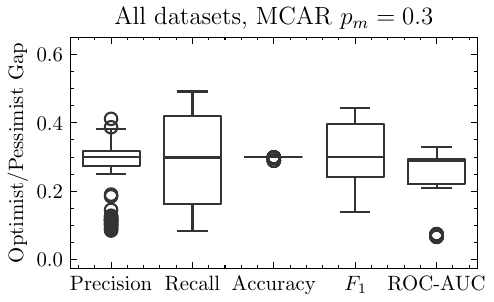}
    \caption{Gap between optimistic and pessimistic bounds on classifier performance measures $\widehat{Q}_n^{(S)}$. Each plot (left-to-right) corresponds to a fraction of missing labels $p_m \in \langle 0.1, 0.2, 0.3 \rangle$.}
    \label{fig:opt_pess_bounds}
\end{figure*}

In this paper we consider binary classification, in which labels $Y$ might be missing and input features $X$ are complete. 
Data $\langle X, Y, M \rangle \in \mathcal{X} \times \mathcal{Y} \times \{0 , 1\}$ are drawn iid (independently and identically distributed) from distribution $\mathcal{D}$, and $\mathcal{Y} = \{0, 1\}$. 
The observed label is denoted by $Y^* \in \mathcal{Y} \cup \{\NA\}$ and $M = 1$ indicates a missing label, that is
$$Y^* = \begin{cases}Y, & M = 0;\\ \NA, & M = 1.\end{cases}$$
We assume that there is a trained model $\hat{y}: \mathcal{X} \to [0, 1]$, where the output $\hat{y}(x)$ is interpreted as the model's subjective probability that the label associated with $x \in \mathcal{X}$ is $1$.
Equipped with a threshold $\tau \in (0, 1)$, $\hat{y}$ induces a classifier $\psi : \mathcal{X} \to \{0, 1\}$ by returning $\psi(x) = 1$ if $\hat{y}(x) \geq \tau$ and $\psi(x) = 0$ otherwise. 


Suppose we have a validation set, $D_n = \{\langle x_i, y^*_i, m_i \rangle \}_{i = 1}^n$, drawn iid from $\mathcal{D}$.
We wish to use this validation set to produce unbiased estimates of some performance metric:
$$
\mathcal{Q}(\hat{y}) = \mathbb{E}_{\mathcal{D}}[ Q(X, Y, \hat{y}(X)) ]
$$
where $Q$ is a probabilistic query\footnote{See Appendix~\ref{sec:prob_queries} for exposition on probabilistic queries.} corresponding to the metric. For example, accuracy has the probabilistic query $Q_{\text{acc}} = P(Y = \psi(X))$ .
We denote an estimator admitted by $\mathcal{Q}$ using the validation set $D_n$ as $\widehat{Q}_n(\hat{y})$, or simply $\widehat{Q}_n$, when $\hat{y}$ is clear from context.
For the accuracy example, when there are no missing labels, that is, $m_i = 0 \Rightarrow y^*_i = y_i$ for all $i \in \{1, \ldots, n\}$, we have:
\begin{align}\label{eq:q_acc_n}
&\widehat{Q}_{\text{acc}, n} = \textstyle\frac{1}{n}\sum_{i=1}^{n} \mathds{1}\{y_i = \psi(x_i)\} \\
&= \textstyle\frac{1}{n} \qty( \sum_{i : y_i = 0} \mathds{1}\{\psi(x_i) = 0\} + \sum_{i : y_i = 1} \mathds{1}\{\psi(x_i) = 1\} ) \notag
\end{align}
where $\mathds{1}\{E\}$ is the indicator function for an event $E$.
Eq.~\eqref{eq:q_acc_n} equals the traditional formula for accuracy:
\begin{align*}
\textstyle\frac{1}{n}(\text{True Positives} + \text{True Negatives}).
\end{align*}
If there is no missing data, i.e. $M = 0$ always, then $\mathcal{Q}$ usually admits an \textit{unbiased}, \textit{consistent} estimator using the validation set $D_n$. That is, 
$$\mathbb{E}_{\mathcal{D}}\qty[\widehat{Q}_n(\hat{y})] = \mathcal{Q}(\hat{y}) \qq{and}
$$
$$
\lim_{n \to \infty} \mathbb{P}_{\mathcal{D}}\qty[\abs{\widehat{Q}_n(\hat{y}) - \mathcal{Q}(\hat{y})} > \epsilon] = 0 \qq{for all} \epsilon > 0.$$
It is straightforward to show that the vanilla estimator for accuracy, $\widehat{Q}_{\text{acc}, n}$, is both unbiased and consistent in the nonmissing labels setting, which for completeness we demonstrate in Proposition~\ref{prop:Q_acc} (Appendix).

\paragraph{Missing Labels.}

Often in practise we have missing data labels in our validation set $D_n$, that is, $M \neq 0$ in general when data is drawn from $\mathcal{D}$.
Unless the labels are MCAR then computing $\widehat{Q}_n$ using the data $(D_n \,\vert \,  M = 0)$, that is, ignoring datapoints with missing labels, will in general give a biased estimate for ~$\mathcal{Q}$~\citep{williams2015missing}.
We wish to produce an unbiased estimate for $\mathcal{Q}$, or better yet, a distribution over plausible values centred on an unbiased estimate.

For simplicity, without loss of generality we assume that the first $k$ labels within $D_n$ are known, giving the set of known labels $\mathcal{K} = \{1, 2, \ldots, k\}$ and the remaining indices as masked labels $\overline{\mathcal{K}} = \{ k+1, k+2, \ldots, n\}$.

\section{NA\"IVE ESTIMATION}\label{sec:naive_est}
For a given $D_n$ there are $2^{n-k}$ possible outcomes describing the true pattern of labels $y_1, \ldots, y_n$.
The set of potential scenarios is defined as  $\mathcal{S} = \big\{\langle y_1, \ldots, y_k, s'_1, \ldots s'_{n-k} \rangle \mid s'_{j} \in \{0, 1\}  \ \text{for all}\ j \in \{1, \ldots, n - k \} \big\}$.
In words, $\mathcal{S} \subset \{0, 1\}^n$ is the set of label assignments to $D_n$ consistent with the observed labels.
Each  $s \in \mathcal{S}$ induces imputed evaluation data $D_n^{(s)}$ by substituting elements of the scenario bitstring $s$ as realised values of $Y$ in $D_n$.
Subsequently, an estimator assuming no missing data $\widehat{Q}_n$ is computed on the imputed dataset $D_n^{(s)}$, yielding the imputed estimator $\widehat{Q}_n^{(s)}$.
Denote by $S$ the random variable describing our ignorance of the true scenario.

We will ask two questions, namely:
\begin{enumerate}
    \item What are the upper and lower bounds on $\widehat{Q}_n^{(S)}$?
    \item What is the probability distribution over $\widehat{Q}_n^{(S)}$ describing our ignorance of $\widehat{Q}_n$?
\end{enumerate}

From the iid assumption on $\mathcal{D}$, we can surmise that the distribution over scenarios is a product distribution over independent label distributions.
We define the random variable $Y_i$ corresponding to the decision-maker's knowledge of the label of a point in $D_n$ like so:
\begin{equation}\label{eq:bern_labels}
Y_i \sim \begin{cases} \mathcal{B}(p_i), & i \in \overline{\mathcal{K}}; \\ \mathds{1}{\{Y_i = y_i\}}, & i \in \mathcal{K}, \end{cases}
\end{equation}
where $\mathcal{B}(p)$ is a Bernoulli distribution with parameter $p \in (0, 1)$ and the $y_i \in \{0, 1\}$ are constants for $i \in \mathcal{K}$.
The parameters $p_i$ for $i \in \overline{\mathcal{K}}$ represent the conditional probability $\mathbb{P}[Y_i = 1 \,\vert\, X = x_i, M = 1]$.
The scenario random variable $S = \langle Y_1, Y_2, \ldots, Y_{n} \rangle$ has the product distribution
$\mathbb{P}[S = s] = \mathbb{P}[Y_1 = s_1] \cdots \mathbb{P}[Y_{n} = s_{n }]$ for $s = s_1 s_2\cdots s_{n}$.

\paragraph{Computational Considerations.}

When there are few missing test labels, i.e. $n - k \lesssim 15$, one can enumerate all scenarios $s \in \mathcal{S}$. 
In this case it is straightforward to answer question 1; we obtain lower and upper bounds by sorting the values in $\{\widehat{Q}_n^{(s)}\}_{s \in \mathcal{S}}$. 
%
The number of scenarios scales exponentially in the number of missing labels $n - k$, precluding this brute force approach in practical problems.
%

\paragraph{Bounds on Imputed Estimator.}\label{sec:bounds}

For a classifier $\psi: \mathcal{X} \to \{0, 1\}$ it is straightforward to produce optimistic and pessimistic bounds for any imputed metric estimator $\widehat{Q}_n^{(S)}$.
Indeed for all missing datapoints $i \in \overline{\mathcal{K}}$, we imagine that an oracle can choose the scenario $s$ that takes place, substituting $s_i = \psi(x_i)$ for the optimistic bound and $s_i = 1 - \psi(x_i)$ for the pessimistic bound.
The oracle evaluates the metric $\widehat{Q}_n^{(s)}$ as one would in the nonmissing case.
The optimistic bound corresponds to the case when the classifier correctly classifies every missing datapoint, the pessimistic bound is the opposite. 

In Figure~\ref{fig:opt_pess_bounds} we show the width of these bounds, evaluated in the MCAR setting on the same experimental setup as in Section~\ref{sec:experiments}. 
We plot the distribution of the widths of the gap between the optimistic and pessimistic values for: precision, recall, accuracy, $F_1$-score and ROC-AUC.
Given that the stated metrics take values in the $[0, 1]$ interval, these bounds are clearly too loose to be of use to a practitioner.
Accordingly, we would prefer a distribution showing how \emph{plausible} any particular value for our chosen metric is.

\section{PERFORMANCE ESTIMATION BY MULTIPLE IMPUTATION}\label{sec:PEMI}

Recall from~\eqref{eq:bern_labels} that for test points with missing labels $i \in \overline{\mathcal{K}}$, Bernoulli distributions $\mathcal{B}(p_i)$ describe knowledge of the labels $Y_i$.
This induces a distribution over the performance metric $\widehat{Q}_n^{(S)}$, suggesting a straightforward algorithm, Performance Estimation by Multiple Imputation (PEMI), to approximate $\widehat{Q}_n^{(S)}$ that is formalised in Algorithm~\ref{alg:PEMI}.
Namely, sample the relevant Bernoulli random variables yielding a bitstring $s$, evaluate $\widehat{Q}_n^{(s)}$, then repeat, recording the values in an empirical cdf (cumulative distribution function). 

\begin{algorithm}
\caption{Performance Estimation by Multiple Imputation (PEMI)}
\label{alg:PEMI}
\footnotesize
\begin{algorithmic}[1]
\Require Metric estimator $\widehat{Q}_n$, nonmissing labels $\{y_i \mid i \in \mathcal{K}\}$, missing Bernoulli parameters $\{p_i \mid i \in \overline{\mathcal{K}}\}$, number of samples $B \in \mathbb{N}$.
\Ensure Empirical cdf $\widehat{F}$
\State $\widehat{F} \gets \qty(t \mapsto 0)$ \Comment{Zero function}
\For{$b \in \{1, \ldots, B\}$}
    \State $s \gets \langle \, \rangle$ \Comment{$\langle \, \rangle$ is empty string} 
    \For{$i \in \langle 1, \ldots, n\rangle$}
        \If{$i \in \overline{\mathcal{K}}$}
            \State Sample $s_i \sim \mathcal{B}(p_i)$
            \State $s \gets s \mdoubleplus \langle s_i \rangle$ \Comment{$\mdoubleplus$ is string concatenation} 
        \Else
            \State $s \gets s \mdoubleplus \langle y_i \rangle$ 
        \EndIf
    \EndFor
    \State $\widehat{F} \gets \widehat{F} + \qty(t \mapsto \frac{1}{B}\mathds{1}\{ \widehat{Q}_n^{(s)} \leq t\})$
\EndFor
\State \Return{$\widehat{F}$}
\end{algorithmic}
\end{algorithm}

\paragraph{Assignment of Bernoulli Parameters for PEMI.}
In practise, it is unlikely that we would know the true values of the $\{ p_i \}$.
However, if we have a \emph{well-calibrated} model for these probabilities then these can be substituted in lieu of the true Bernoulli parameters into Algorithm~\ref{alg:PEMI}.
A model being well-calibrated means that the predicted probability of an event reflects the true frequency of the event's occurrence~\citep{DeGroot1983}.
For example, consider a model predicting if it will rain or not. Suppose 1000 inputs to this model receive a predicted 20\% chance of rain. Then out of these, roughly 200 will observe rain if the model is well-calibrated.

By default, many modern ML models are not well-calibrated~\citep{Zadrozny2001, Guo2017, kuleshov2018accurate}.
In this work we use the scaling-binning calibrator of~\citet{kumar2019calibration} to provide estimates of the $\{p_i\}$ for use within Algorithm~\ref{alg:PEMI} to give an empirical distribution of $\widehat{Q}_n^{(S)}$.
At the time of writing, there is early work~\citep{Kweon2024, Gong2025} on MNAR-aware model calibration in the context of recommender systems, but for our purposes the scaling-binning calibrator is sufficient.

If one wishes to be less committal in the assumptions placed upon the missing label distributions, one can invoke the \emph{Maximum Entropy Principle} (MaxEnt) of~\cite{Jaynes1957}, whereby the distribution describing our uncertainty about a random variable is the one maximising Shannon entropy, subject to testable information. 
Within our setting, where the $Y_i$ take values in $\{0, 1\}$ and are drawn iid, we have $p_i = \frac{1}{2}$ when no constraints are imposed.
We may also impose that the mean over the training set and test set are equal, giving $p_i = N_+ / N$ where $N$ is the total number of examples in the training set and $N_+$ the number of positives.
Choosing $p_i$ as the calibrated probability estimate is equivalent to imposing $X_i = x_i$ as testable constraints in the MaxEnt framework.

\paragraph{Performance Metrics.}

We shall see that for several common classification metrics, $\widehat{Q}_n^{(S)}$ is either a weighted sum of Bernoulli random variables or a ratio thereof.
In both cases, we are able to show that the distribution is approximately Gaussian (Lemma~\ref{lem:bernoulli_sum} and Theorem~\ref{thm:ratio_sum_bernoulli}).
The first case is a standard result; the ratio of sums of Bernoulli variables is novel to the authors' knowledge.
We also show in Theorem~\ref{thm:robustness} that these results are robust under a realistic noise model for the calibrator.

\paragraph{Confusion matrix-based metrics.}
In this work, we mainly focus on evaluation metrics based on the confusion matrix (CM), as defined below.
\begin{align*}\label{eq:confusion_matrix}
\text{CM} &= 
\left[
\begin{smallmatrix}
\text{True Positives} & \text{False Negatives} \\
\text{False Positives} & \text{True Negatives} 
\end{smallmatrix}
\right] = 
n\cdot
\left[
\begin{smallmatrix}
\text{TP} & \text{FN} \\
\text{FP} & \text{TN} 
\end{smallmatrix}
\right] \\
 &= n\cdot
\left[
\begin{smallmatrix}
P(Y = 1, \psi(X) = 1) & P(Y = 1, \psi(X) = 0)  \\
P(Y = 0, \psi(X) = 1)  & P(Y = 0, \psi(X) = 0) 
\end{smallmatrix}
\right]
\end{align*}
The elements of the CM may be thought of as the product of the number of evaluation points $n$ with an estimator of certain probabilistic queries using $D_n$.
Common classification metrics comprise simple arithmetic combinations of elements of the CM, and as such correspond to probabilistic queries too.
These probabilistic queries are shown in Table~\ref{tab:prob_queries}.
\begin{table}
\centering
\caption{CM-based classification metrics with associated probabilistic queries.}
\label{tab:prob_queries}
\resizebox{\columnwidth}{!}{%
\begin{tabular}{lcc}
\toprule
Metric      & CM Formula                                                                                                                                                                       & Probabilistic Query                                                                                                                                        \\ \midrule
Precision   & $\frac{\text{TP}}{\text{TP} + \text{FP}}$ & $\frac{P(Y = 1, \psi(X) = 1)}{P(Y = 1, \psi(X) = 1) + P(Y = 0, \psi(X) = 1)} = \frac{P(Y = 1, \psi(X) = 1)}{P(\psi(X) = 1)}$ \\ 
Recall      & $\frac{\text{TP}}{\text{TP} + \text{FN}}$                                                                                                                                                                          &                                                                                                            $\frac{P(Y = 1, \psi(X) = 1)}{P(Y = 1, \psi(X) = 1) + P(Y = 1, \psi(X) = 0)}$                                                \\ 
Accuracy    & $\text{TP} + \text{TN}$                                                                                                                                                                           &                                                      $P(Y = 1, \psi(X) = 1) + P(Y = 0, \psi(X) = 0)$                                                                                                      \\ 
$F_1$ Score & $\frac{2 \text{TP}}{2\text{TP} + \text{FP} +  \text{FN}}$                                                                                                                                                                          &                                                                                  $\frac{2 P(Y = 1, \psi(X) = 1)}{2P(Y = 1, \psi(X) = 1) + P(Y = 0, \psi(X) = 1) +  P(Y = 1, \psi(X) = 0)}$                                                                          \\
\bottomrule
\end{tabular}
}
\end{table}
\begin{table}
\centering
\caption{CM estimators. For a variable $z \in \{0, 1\}$, let $\overline{z} = 1 - z$. Recall $Y_i \sim \mathcal{B}(p_i)$ for all $i \in \overline{\mathcal{K}}$.}
\label{tab:prob_queries_estimators}
    \resizebox{\columnwidth}{!}{%
\begin{tabular}{lc}
\toprule
CM & Estimator \\
\midrule
TP & $\sum_{i = 1}^n \mathds{1}\{y_i = 1 \wedge \psi_i = 1\}= \sum_{i \in \mathcal{K}} y_i \psi_i + \sum_{i \in \overline{\mathcal{K}}} Y_i\psi_i$ \\
FN & $\sum_{i = 1}^n \mathds{1}\{y_i = 1 \wedge \psi_i = 0\}= \sum_{i \in \mathcal{K}} y_i \overline{\psi_i} + \sum_{i \in \overline{\mathcal{K}}} Y_i \overline{\psi_i}$ \\
FP & $\sum_{i = 1}^n \mathds{1}\{y_i = 0 \wedge \psi_i = 1\} = \sum_{i \in \mathcal{K}} \overline{y_i} \psi_i + \sum_{i \in \overline{\mathcal{K}}} \overline{Y_i} \psi_i$ \\
TN &  $\sum_{i = 1}^n \mathds{1}\{y_i = 0 \wedge \psi_i = 0\} = \sum_{i \in \mathcal{K}} \overline{y_i} \overline{\psi_i} + \sum_{i \in \overline{\mathcal{K}}} \overline{Y_i} \overline{\psi_i}$  \\
\bottomrule
\end{tabular}
}
\end{table}
In Table~\ref{tab:prob_queries_estimators} we show the estimators of the CM elements.
When $\overline{\mathcal{K}} = \emptyset$ we recover the nonmissing estimators for the CM elements.
When $\overline{\mathcal{K}} \neq \emptyset$ the estimators of the CM elements are now random variables. 
We list their means and covariance matrix in Lemma~\ref{lem:cm_mean_cov} in the appendix.

Thus, under the iid assumption, substitution of the CM estimators from Table~\ref{tab:prob_queries_estimators} into Table~\ref{tab:prob_queries} leads us to the following: The classification metrics precision, recall, accuracy and $F_1$-score are distributed as sums of Bernoulli random variables or (correlated) ratios thereof.

\paragraph{Rank-based metrics.}

There are other performance metrics for binary classifiers that do not depend on the confusion matrix, but instead of the \emph{ranking} between data points, that is, how $\hat{y}(x)$ and $\hat{y}(x')$ compare with one another for $x, x' \in \mathcal{X}$.
One such metric is the area under the curve of the receiver operating characteristic~\citep{Hanley1983, BRADLEY19971145}, denoted as ROC-AUC, which we shall also focus on in this work.
Indeed, ROC-AUC corresponds to an estimation of the following probabilistic query
\begin{equation}\label{eq:roc_auc_query}
P(\hat{y}(X) \geq \hat{y}(X') \mid Y = 1,\ Y' = 0),
\end{equation}
where $\langle X, Y \rangle$ and $\langle X', Y' \rangle$ are understood to come from independent draws from $\mathcal{D}$~\citep{BRADLEY19971145}.
We describe the estimator $\widehat{Q}_{\text{roc-auc}, n}^{(S)}$, in Eq.~\eqref{eq:roc_auc_estimator_missing} and its mean and variance in Remark~\ref{rem:roc_auc_mean_cov} (relegated to Appendix for brevity).

\section{APPROXIMATE DISTRIBUTIONS}\label{sec:approx_dist}

Theoretical results in the sequel have proof in Appendix Section~\ref{sec:proofs}.

\paragraph{Sums of Bernoulli Random Variables.}

It is well-known that a sum of Bernoulli random variables under certain conditions is well-approximated by a Gaussian distribution.
To be more precise, ``well-approximated'' means close in Kolmogorov-Smirnov (KS) distance.
For random variables $U, V$ with cdfs $F_U$, $F_V$, the KS distance between their distributions is given by
\begin{align*}
    \mathrm{d}_{\mathsf{KS}}(F_U, F_V) &= \sup_{t \in \mathbb{R}} \abs{F_U(t) - F_V(t)}. 
\end{align*}
We can bound the KS distance of a weighted sum of Bernoulli variables to an appropriately-scaled Gaussian, denoted by $\mathcal{N}(\mu, \sigma^2)$ using Lemma~\ref{lem:bernoulli_sum}.
\begin{restatable}{lemma}{bernoullisum}
\label{lem:bernoulli_sum}
    Let $Z = \sum_{i = 1}^n a_i Y_i$, where $Y_i \sim \mathcal{B}(p_i)$ and mutually independent, $p_i \in (0, 1)$ and $a_i \in \mathbb{R}\setminus \{0\}$.
    Then, we have that
    \vspace*{-3mm}
    \begin{equation*}\mathrm{d}_{\mathsf{KS}}(F_Z , \mathcal{N}(\mu_z, \sigma_z^2)) \leq \frac{C_0}{\sqrt{n v_{*}}} \frac{1 + a^{*}}{2 a_{*}}, \qq{where}
    \end{equation*}
    \vspace*{-6.25mm}
    \begin{align*}
    v_{*} &= \min_i \{ p_i (1 - p_i)\},\ \ a_{*} = \min_i \{\abs{a_i}\},\ a^{*} = \max_i \{\abs{a_i}\},\\
    \textstyle \mu_z &= \textstyle \sum_{i = 1}^n a_i p_i,\ \textstyle \sigma_z^2 = \sum_{i = 1}^n a_i p_i (1 - p_i)
    \end{align*}
    and $C_0 = 0.5600$ is a universal constant.
\end{restatable}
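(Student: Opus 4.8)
The plan is to recognize $Z$ as a sum of independent, non-identically-distributed summands and to reduce the claim to the Berry--Esseen theorem in Lyapunov form. First I would center the Bernoullis by setting $W_i = a_i(Y_i - p_i)$, so that $Z - \mu_z = \sum_{i=1}^n W_i$ with $\mathbb{E}[W_i] = 0$ and the $W_i$ mutually independent. Since the KS distance is invariant under the common affine map $x \mapsto (x-\mu_z)/\sigma_z$, bounding $\mathrm{d}_{\mathsf{KS}}(F_Z, \mathcal{N}(\mu_z,\sigma_z^2))$ is the same as bounding $\sup_t \abs{F_{(Z-\mu_z)/\sigma_z}(t) - \Phi(t)}$, which is exactly the quantity Berry--Esseen controls. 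The universal constant $C_0 = 0.5600$ in the statement is the tell here: this is the best known constant for the independent, non-identically-distributed Berry--Esseen bound (Shevtsova), so I would cite that version directly rather than reprove it.

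Second comes the moment bookkeeping for a centered Bernoulli. I would compute $\mathrm{Var}(W_i) = a_i^2 p_i(1-p_i)$ and, using that $Y_i - p_i$ equals $1-p_i$ with probability $p_i$ and $-p_i$ with probability $1-p_i$, the third absolute moment $\mathbb{E}\abs{W_i}^3 = \abs{a_i}^3\,p_i(1-p_i)\bigl[(1-p_i)^2 + p_i^2\bigr]$. The one elementary observation that does the real work is $(1-p_i)^2 + p_i^2 \leq 1$, giving $\mathbb{E}\abs{W_i}^3 \leq \abs{a_i}^3 p_i(1-p_i)$; dividing by the per-term variance then yields the clean ratio $\mathbb{E}\abs{W_i}^3 / \mathrm{Var}(W_i) \leq \abs{a_i} \leq a^{*}$.

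With these in hand, the Lyapunov fraction $L = \sum_i \mathbb{E}\abs{W_i}^3 / \bigl(\sum_i \mathrm{Var}(W_i)\bigr)^{3/2}$ is bounded by pulling $a^{*}$ out of the numerator, $\sum_i \mathbb{E}\abs{W_i}^3 \leq a^{*}\sum_i \mathrm{Var}(W_i) = a^{*}\sigma_z^2$, so that $L \leq a^{*}/\sigma_z$. The remaining task is a uniform lower bound on the denominator: since $p_i(1-p_i) \geq v_{*}$ and $a_i^2 \geq a_{*}^2$ for every $i$, we get $\sigma_z^2 = \sum_i a_i^2 p_i(1-p_i) \geq n a_{*}^2 v_{*}$, i.e. $\sigma_z \geq a_{*}\sqrt{n v_{*}}$. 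Combining gives $\mathrm{d}_{\mathsf{KS}} \leq C_0 L \leq C_0 a^{*}/(a_{*}\sqrt{n v_{*}})$, from which the stated form follows after the final repackaging of the weight factor (for the weight ranges of interest, where $\abs{a_i}\leq 1$, one has $a^{*}\leq \tfrac{1+a^{*}}{2}$, recovering the displayed $\tfrac{C_0}{\sqrt{n v_{*}}}\tfrac{1+a^{*}}{2a_{*}}$).

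I expect the main difficulty to be organizational rather than deep. The one genuine subtlety is invoking the correct \emph{non-i.i.d.} Berry--Esseen statement with its sharp constant: an i.i.d. version would not apply, since the summand laws $a_i(Y_i - p_i)$ differ across $i$. The only other place that needs care is the uniform variance lower bound, which silently uses the hypotheses $p_i \in (0,1)$ and $a_i \neq 0$ to guarantee $v_{*} > 0$ and $a_{*} > 0$, so that the right-hand side is finite and the bound is nonvacuous. The third-moment computation and the simplification of the Lyapunov ratio are routine once the centering and the $(1-p_i)^2 + p_i^2 \leq 1$ estimate are in place.
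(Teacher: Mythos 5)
Your proposal takes essentially the same route as the paper's proof: center the weighted Bernoullis, apply the non-i.i.d.\ Berry--Esseen theorem with Shevtsova's constant $C_0=0.5600$, control the third absolute moments of the centered Bernoullis, and lower-bound the total variance by $n a_*^2 v_*$. The paper merely standardizes by $\sigma_z$ at the outset (transferring back to $Z$ at the end by the same affine-invariance observation you make up front) and organizes the moment bound by writing $(1-p_i)^2+p_i^2 = \tfrac12 + 2\qty(p_i-\tfrac12)^2$ and bounding the two pieces separately, rather than using $(1-p_i)^2+p_i^2\le 1$ directly.

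The one substantive point is the final repackaging, and you are right to flag it: your rigorous conclusion is $\mathrm{d}_{\mathsf{KS}} \le C_0 a^*/(a_*\sqrt{n v_*})$, and passing to the stated $\tfrac{1+a^*}{2a_*}$ form requires $a^*\le 1$, which is not implied by the hypothesis $a_i\in\mathbb{R}\setminus\{0\}$. You should know, however, that the paper's own proof carries exactly the same restriction, only hidden: in the step annotated ``Splitting sum and $\sigma_z$ cancellation'', the term $\tfrac12\sum_i a_i^3 p_i(1-p_i)\big/\big(\sum_i a_i^2 p_i(1-p_i)\big)^{3/2}$ is rewritten as $\tfrac12\big(\sum_i a_i^2 p_i(1-p_i)\big)^{-1/2}$, i.e.\ $a_i^3$ silently becomes $a_i^2$; this is an equality only when $\abs{a_i}=1$ and a valid inequality only when $\abs{a_i}\le 1$. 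So your proof is as strong as the paper's and more transparent about where the weight normalization enters. (In the paper's downstream use the weights are indicators $\psi_i\in\{0,1\}$, so effectively $a_*=a^*=1$ and both arguments apply; note also that both you and the paper's proof use the correct variance $\sigma_z^2=\sum_i a_i^2 p_i(1-p_i)$, whereas the lemma statement itself has a typo, writing $a_i$ in place of $a_i^2$.)
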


Consultation of Tables~\ref{tab:prob_queries}~and~\ref{tab:prob_queries_estimators} tells us that the estimators for Accuracy and Precision take the format $Z + \mathrm{const.}$ (with $a_* = a^* = 1$), leading to $O(n^{-1/2})$ scaling in the KS distance to a Gaussian (for fixed $v_*$) from Lemma~\ref{lem:bernoulli_sum}.

\paragraph{Ratios of Normal Variables.}

For the remaining CM-based metrics -- Recall and $F_1$ score -- we see from Tables~\ref{tab:prob_queries}~and~\ref{tab:prob_queries_estimators} that each will be distributed according to a ratio of (correlated) sums of Bernoullis.  
From Lemma~\ref{lem:bernoulli_sum} we can treat the sums in the numerator and denominator as Gaussian random variables and consider the distribution of the ratio.
It is known~\citep{Marsaglia1965} that the ratio of correlated normal random variables is poorly behaved in general, in that no moments of the resulting distribution exist.
Nonetheless, it has been shown empirically that in many cases this distribution is approximately normal itself~\citep{Marsaglia2006}. We present the first rigorous bounds on the distance of this ratio distribution from normality.

\begin{restatable}[Ratio of Correlated Gaussians]{theorem}{ratiogauss}
\label{thm:ratio_gauss}
Let $Z$ and $W$ be jointly Gaussian random variables, i.e. $(Z, W) \sim \mathcal{N}(m, \Sigma)$, where $m = [ \mu_z, \mu_w ]^\mathsf{T}$ and
$ \Sigma =
\qty[\begin{smallmatrix}
\sigma_z^2 & \rho \sigma_z \sigma_w\\
\rho \sigma_w \sigma_z & \sigma_w^2\\
\end{smallmatrix}].
$
Moreover, let the the distribution of $T := Z/W$ be described by $G(t) = \mathbb{P}[T \leq t]$.
Then, under conditions 
\begin{enumerate}
    \item $\mathbb{P}(W > 0) \to 1$,. i.e. $\mu_w / \sigma_w \gg 1$;
    \item $2 \abs{\mu_z} \abs{\frac{\sigma_w}{\mu_w} - \frac{\sigma_z}{\mu_z}} \gg 1$;
\end{enumerate}
the distribution of $Z / W$ satisfies
\begin{equation*}
\mathrm{d}_{\mathsf{KS}}(G, \mathcal{N}(\mu, \sigma^2)) \leq  \sqrt{\frac{2}{\pi}} \cdot \frac{\sigma_w^2 (\abs{\mu_z} + \sigma_z^2) + \mu_w^2 }{\sigma \mu_w^3},
\end{equation*}
$$\text{where} \quad \mu = \textstyle \frac{\mu_z}{\mu_w}, \quad \sigma^2 = \textstyle \frac{{\mu_z}^{2} {\sigma_w}^{2} + {\mu_w}^{2} {\sigma_z}^{2} - 2 \, \rho \sigma_z \sigma_w {\mu_z} {\mu_w}}{{\mu_w}^{4}}.
$$
\end{restatable}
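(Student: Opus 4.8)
The plan is to identify the target Gaussian $\mathcal{N}(\mu,\sigma^2)$ as the exact law of the first-order (delta-method) linearisation of $Z/W$ about $(\mu_z,\mu_w)$, and then to compare cumulative distribution functions directly. Writing $\tilde T = \mu + (Z-\mu_z)/\mu_w - \mu_z(W-\mu_w)/\mu_w^2$, a short computation shows $\tilde T \sim \mathcal{N}(\mu,\sigma^2)$ with exactly the stated $\mu$ and $\sigma^2$; equivalently, the target cdf is $t \mapsto \Phi(h(t))$ with $h(t) = (t-\mu)/\sigma$ and $\Phi$ the standard normal cdf. This reframes the theorem as a bound on $\mathrm{d}_{\mathsf{KS}}(G,\mathcal{N}(\mu,\sigma^2)) = \sup_t \abs{G(t) - \Phi(h(t))}$.

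First I would obtain a workable expression for $G(t) = \mathbb{P}(Z/W \le t)$. Splitting on the sign of $W$,
$$G(t) = \mathbb{P}(Z - tW \le 0,\ W > 0) + \mathbb{P}(Z - tW \ge 0,\ W < 0),$$
and comparing with $\mathbb{P}(Z - tW \le 0)$, the two expressions differ by at most $\mathbb{P}(W < 0) = \Phi(-\mu_w/\sigma_w)$, which Condition~1 drives to zero. Since $Z - tW$ is Gaussian with mean $\mu_z - t\mu_w$ and variance $\sigma_{U_t}^2 := \sigma_z^2 - 2t\rho\sigma_z\sigma_w + t^2\sigma_w^2$, the dominant term is $\Phi(g(t))$ with $g(t) = (t\mu_w - \mu_z)/\sigma_{U_t}$. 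Thus it remains to bound $\sup_t\abs{\Phi(g(t)) - \Phi(h(t))}$ up to the negligible sign correction.

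The key structural facts are that $g$ and $h$ agree to first order at $t = \mu$: both vanish there (the shared numerator $t\mu_w - \mu_z$ is zero), and since $\sigma_{U_\mu} = \sigma\mu_w$ one checks $g'(\mu) = h'(\mu) = 1/\sigma$. Writing $\Phi(g(t)) - \Phi(h(t)) = \int_{h(t)}^{g(t)}\phi(u)\,\mathrm{d}u$ and noting $g(t) = h(t)\cdot(\sigma\mu_w/\sigma_{U_t})$ always shares the sign of $h(t)$, I would bound this integral by the gap $\abs{g(t)-h(t)}$ times the Gaussian density evaluated at the endpoint nearest the origin. The first-order agreement makes $\abs{g-h}$ quadratically small near $t = \mu$, while Condition~2 bounds the curvature governing the second-order remainder; collecting terms produces the factor $\sigma_w^2(\abs{\mu_z} + \sigma_z^2) + \mu_w^2$ over $\mu_w^3$, and the density prefactor $\norm{f_{\tilde T}}_\infty = 1/(\sigma\sqrt{2\pi})$ together with the two-sided contribution yields the $\sqrt{2/\pi}\,/\sigma$ scaling.

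The main obstacle is that $\abs{g(t) - h(t)}$ does not stay bounded: as $t \to \pm\infty$ we have $g(t) \to \pm\mu_w/\sigma_w$ while $h(t)$ diverges linearly, so the naive Lipschitz estimate $\abs{\Phi(g)-\Phi(h)} \le (2\pi)^{-1/2}\abs{g - h}$ is useless. The crux is therefore a delicate tail argument showing that precisely where $\abs{g - h}$ is large, the intervening mass $\int_{h}^{g}\phi$ is small, so that the supremum is attained at a finite $t$ and controlled by the claimed quantity. This is compounded by the fact that $Z/W$ has no finite moments, so coupling-plus-Markov bounds on $\mathbb{E}\abs{Z/W - \tilde T}$ are unavailable; Conditions~1 and~2 are exactly the hypotheses that make the sign correction and the tail/curvature tradeoff simultaneously controllable.
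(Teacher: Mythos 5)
Your reduction is the same as the paper's: both split on the sign of $W$ (the Geary--Hinkley step), discard the $\mathbb{P}(W<0)$ correction by Condition~1, and reduce the claim to bounding $\sup_t \lvert \Phi(g(t)) - \Phi(h(t))\rvert$ with $g(t) = (t\mu_w - \mu_z)/\sigma_{U_t}$ and $h(t) = (t-\mu)/\sigma$. Your structural observations are also correct and match the paper: the target Gaussian is exactly the law of the delta-method linearisation, $g$ and $h$ both vanish at $t=\mu$ with common derivative $1/\sigma$ (in the paper's notation, $g(t) = (t-\mu)/s(t)$ with $s(\mu)=\sigma$), and $g$, $h$ always share a sign. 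Up to this point the two arguments coincide.

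The gap is in the comparison step, which is where the theorem's entire content lives. Your bounding device --- $\lvert\int_{h(t)}^{g(t)}\phi(u)\,\mathrm{d}u\rvert \le \lvert g(t)-h(t)\rvert \cdot \phi(\text{endpoint nearest } 0)$ --- fails in the tails by your own admission: the gap grows linearly in $\lvert t\rvert$ while the density factor stabilises at $\phi(\mu_w/\sigma_w)$, so the product is unbounded. You respond by positing a ``delicate tail argument'' and asserting that ``collecting terms produces the factor $\sigma_w^2(\lvert\mu_z\rvert+\sigma_z^2)+\mu_w^2$,'' but neither is carried out, and these are precisely the two things the paper actually does. The paper's mechanism is different from yours and is what makes the tails tractable: an integration by parts (writing $\phi(z) = \tfrac{1}{z}\cdot z\phi(z)$ inside the integral) converts the CDF difference into $\tfrac{1}{\sqrt{2\pi}\,\lvert t-\mu\rvert}\,\bigl\lvert s(t)e^{-(t-\mu)^2/2s(t)^2} - \sigma e^{-(t-\mu)^2/2\sigma^2}\bigr\rvert$, a quantity with Gaussian decay attached to both terms, so no separate tail regime with an unbounded gap ever arises. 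The explicit constant then comes from hard computation: Taylor-expanding $s(t)=\sigma+s_1(t-\mu)+s_2(t-\mu)^2+\cdots$, differentiating to get $s_1=(\mu_z\sigma_w^2-\rho\sigma_z\sigma_w\mu_w)/(\mu_w^3\sigma)$ and $s_2=\sigma_w^2\sigma_z^2(1-\rho^2)/(2\mu_w^4\sigma^3)$, restricting to the interval $\lvert t-\mu\rvert\le 2\lvert\mu_w\rvert\sigma^2$ (outside of which Condition~2 is invoked to make the expression negligible), and then applying the triangle inequality and completing the square. Without a substitute for this machinery, your proposal reduces the theorem to an unproved claim that is essentially the theorem itself: a correct plan of attack, shared with the paper in its first half, but with the quantitative core missing.
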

In Remark~\ref{rem:bounds_interp} (Appendix) an interpretation of each term in the bound of Theorem~\ref{thm:ratio_gauss} is given.

\paragraph{Ratio Distribution of Sum of Weighted Bernoulli Variables.}

To show a rigorous bound on the distance of the predictive distributions of Recall and $F_1$-score, we cannot assume the numerator and denominator are Gaussian, but explicitly consider them as sums of Bernoulli random variables. 
We combine the result of Theorem~\ref{thm:ratio_gauss} with Lemma~\ref{lem:bernoulli_sum} to prove Theorem~\ref{thm:ratio_sum_bernoulli}, an explicit bound on KS distance from a normal distribution applicable to these performance metrics.
\begin{restatable}[Gaussian Approximation]{theorem}{ratiosumbernoulli}
\label{thm:ratio_sum_bernoulli}
    Consider the random variables 
    \begin{equation*}
    \widetilde{Z} = \alpha + \textstyle\sum_{i = 1}^n a_i Y_i, \quad \widetilde{W} = \beta + \textstyle\sum_{i = 1}^n b_i Y_i
    \end{equation*}
    for $0 < \alpha \leq \beta$, $a_i \geq 0$, $b_i > 0$, $b_i \geq a_i$, $Y_i \sim \mathcal{B}(p_i)$ and mutually independent for $p_i \in (0, 1)$. Moreover, $v_{*} = \min_i \{p_i (1 - p_i)\}$, $a_{*} = \min_i \{\abs{a_i} \mid a_i > 0\}$, $b^{*} = \max_i \{\abs{b_i} \}$ and $n_a = \abs{\{ a_i \mid a_i > 0\}} \leq  n$. 
    The ratio distribution $\widetilde{G}(t) = \mathbb{P}[\widetilde{Z} / \widetilde{W} \leq t]$ is approximately Gaussian, that is
    \vspace*{-0.5em}
    \begin{multline*}
        \mathrm{d}_{\textsf{KS}}(\widetilde{G}, \mathcal{N}(\mu , \sigma ))  \leq \frac{C_0 }{\sqrt{n_a v_{*}}} \frac{1 + b^{*}}{a_{*}} \IfRestatedTF{}{\\} + \sqrt{\frac{2}{\pi}} \cdot \frac{\sigma_w^2 (\abs{\mu_z} + \sigma_z^2) + \mu_w^2 }{\sigma \mu_w^3},
    \end{multline*}
    where $C_0 = 0.5600$ is a universal constant and
    \begin{align*}
    \mu_z &= \alpha + \textstyle \sum_{i = 1}^n a_i p_i, \quad  \sigma_z^2 = \textstyle \sum_{i = 1}^n a^2_i p_i(1 - p_i), \\
    \mu_w &= \beta + \textstyle \sum_{i = 1}^n b_i p_i, \quad  \sigma_w^2 = \textstyle \sum_{i = 1}^n b^2_i p_i(1 - p_i), \\
    \rho &= \textstyle \frac{1}{\sigma_z \sigma_w} \textstyle \sum_{i = 1}^n a_i b_i p_i(1 - p_i) , \\
    \mu &= \textstyle \frac{\mu_z}{\mu_w}, \quad \sigma^2 = \textstyle \frac{{\mu_z}^{2} {\sigma_w}^{2} + {\mu_w}^{2} {\sigma_z}^{2} - 2 \, \rho \sigma_z \sigma_w {\mu_z} {\mu_w}}{{\mu_w}^{4}}.
    \end{align*}
\end{restatable}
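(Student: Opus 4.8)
The plan is to bound the target KS distance by a triangle inequality that inserts, as an intermediate object, the law $G$ of the ratio $Z/W$ of the \emph{matched} bivariate Gaussian $(Z,W)\sim\mathcal{N}(m,\Sigma)$ whose moments are exactly the $m=[\mu_z,\mu_w]^{\mathsf{T}}$ and $\Sigma$ assembled from $\sigma_z^2,\sigma_w^2,\rho$ in the statement. Then
\[
\mathrm{d}_{\mathsf{KS}}(\widetilde{G},\mathcal{N}(\mu,\sigma^2)) \le \mathrm{d}_{\mathsf{KS}}(\widetilde{G},G) + \mathrm{d}_{\mathsf{KS}}(G,\mathcal{N}(\mu,\sigma^2)).
\]
The second term is dispatched immediately by Theorem~\ref{thm:ratio_gauss}, applied to $(Z,W)$: its hypotheses are the conditions stated there, and it reproduces \emph{verbatim} the second summand $\sqrt{2/\pi}\,(\sigma_w^2(\abs{\mu_z}+\sigma_z^2)+\mu_w^2)/(\sigma\mu_w^3)$ of the claimed bound while fixing $\mu=\mu_z/\mu_w$ and $\sigma^2$ to the advertised values. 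Hence all the real work lies in the first term, comparing the ratio-of-Bernoulli-sums law with the ratio-of-Gaussians law sharing its first two moments.

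The key device for the first term is to \emph{linearise} both ratios. Since $\alpha>0$, $a_i\ge 0$, $b_i>0$ force $\widetilde{W}=\beta+\sum_i b_iY_i>0$ almost surely, the event $\{\widetilde{Z}/\widetilde{W}\le t\}$ is \emph{exactly} $\{\widetilde{D}_t\le 0\}$, where $\widetilde{D}_t:=\widetilde{Z}-t\widetilde{W}=(\alpha-t\beta)+\sum_{i=1}^n c_i(t)\,Y_i$ with $c_i(t):=a_i-tb_i$. Thus $\widetilde{D}_t$ is again a constant plus a weighted sum of the \emph{same} independent Bernoullis, so Lemma~\ref{lem:bernoulli_sum} applies to it for each fixed $t$. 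On the Gaussian side $D_t:=Z-tW$ is univariate normal, and a direct computation shows $\widetilde{D}_t$ and $D_t$ share mean $\mu_z-t\mu_w$ and variance $\sigma_z^2-2t\rho\sigma_z\sigma_w+t^2\sigma_w^2$ for \emph{every} $t$ (this is where the definitions of $\mu_z,\mu_w,\sigma_z^2,\sigma_w^2,\rho$ are used). This moment match is the linchpin: it lets Lemma~\ref{lem:bernoulli_sum} compare $\widetilde{D}_t$ against precisely the correct normal. Evaluating both CDFs at $0$ gives, pointwise in $t$,
\[
\abs{\widetilde{G}(t)-\mathbb{P}[D_t\le 0]}\le \mathrm{d}_{\mathsf{KS}}\!\left(F_{\widetilde{D}_t},\,\mathcal{N}(\mu_z-t\mu_w,\operatorname{Var}\widetilde{D}_t)\right),
\]
after which I would bound the right-hand side by Lemma~\ref{lem:bernoulli_sum} and take the supremum over $t$; a negligible correction $\abs{\mathbb{P}[D_t\le0]-G(t)}\le\mathbb{P}[W\le 0]\to 0$ (condition~1 of Theorem~\ref{thm:ratio_gauss}) accounts for the sign ambiguity of $Z/W$ when $W<0$.

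Feeding $\widetilde{D}_t$ into Lemma~\ref{lem:bernoulli_sum} produces $\tfrac{C_0}{\sqrt{N(t)v_*}}\cdot\tfrac{1+\max_i\abs{c_i(t)}}{2\min_i\abs{c_i(t)}}$, where the extrema and the count $N(t)$ range over the nonzero $c_i(t)$. The numerator is easy: for $t\in[0,1]$, the bounds $0\le a_i\le b_i$ give $\abs{a_i-tb_i}\le b_i\le b^*$, so $1+\max_i\abs{c_i(t)}\le 1+b^*$. Recovering $n_a$ and $a_*$ in the denominator is where the main obstacle sits: $\min_i\abs{c_i(t)}$ can vanish as $t$ approaches any ratio $a_j/b_j$, so the per-$t$ bound is not uniformly finite. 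I would resolve this by the standard truncation argument — a summand with $\abs{c_i(t)}$ below a small threshold is nearly deterministic and can be discarded at negligible cost to normality — after which the surviving weights are bounded below by a multiple of $a_*$ and the surviving count is at least $n_a$ (the indices with $a_i>0$ dominate once $t$ is away from the endpoints). Crucially, the problematic extremes are harmless for the supremum: because $\widetilde{Z}\ge\alpha>0$ and $\widetilde{W}-\widetilde{Z}\ge\beta-\alpha\ge 0$, the support of $\widetilde{Z}/\widetilde{W}$ is a compact subinterval of $(0,1]$, so $\widetilde{G}$ (and, by concentration, $G$) is $0$ or $1$ outside it and contributes nothing to $\sup_t\abs{\widetilde{G}-G}$. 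Careful bookkeeping of this truncation then collapses the pointwise bound to the uniform $\tfrac{C_0}{\sqrt{n_a v_*}}\cdot\tfrac{1+b^*}{a_*}$, the factor of two being absorbed into the transition from $\tfrac{1}{2\min_i\abs{c_i(t)}}$ to $\tfrac{1}{a_*}$. I expect this uniform-in-$t$ control of the minimum surviving weight, together with verifying that the discarded mass is genuinely negligible, to be the most delicate part of the argument; the remaining steps are the triangle inequality, the moment match, and two direct invocations of the quoted lemmas.
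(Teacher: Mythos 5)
Your overall skeleton matches the paper's: the same triangle inequality through the law $G$ of the matched Gaussian ratio $Z/W$, with the second summand dispatched by Theorem~\ref{thm:ratio_gauss} exactly as the paper does, and your observation that $\widetilde{D}_t = \widetilde{Z} - t\widetilde{W}$ has moments matching $D_t = Z - tW$ for every $t$ is correct. The divergence --- and the gap --- is in how you bound the Bernoulli-to-Gaussian term. You linearise to $\widetilde{D}_t = (\alpha - t\beta) + \sum_i c_i(t) Y_i$ with $c_i(t) = a_i - t b_i$ and apply Lemma~\ref{lem:bernoulli_sum} per fixed $t$; as you yourself note, the lemma's bound degenerates when some $\abs{c_i(t)}$ is small, and your proposed fix is a truncation whose key claims are false in general. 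Near $t = a_j/b_j$ the weight $c_j(t)$ vanishes even though $a_j > 0$, so ``the surviving weights are bounded below by a multiple of $a_*$'' fails; and when several indices share the same ratio $a_i/b_i$ (e.g.\ $a_i = a$, $b_i = b$ for all $i$, giving a single degenerate point $t = a/b$ in the interior of the support $(0,1]$), essentially \emph{all} weights vanish simultaneously, so ``the surviving count is at least $n_a$'' fails as well. These degenerate values of $t$ lie inside the support of $\widetilde{Z}/\widetilde{W}$, so your compact-support remark does not exclude them. (The blow-up is an artifact of the simplified $a_*, a^*$ form of Lemma~\ref{lem:bernoulli_sum}: the raw Berry--Esseen ratio of third moments to variance$^{3/2}$ does not degenerate, since small-weight summands contribute negligibly to both numerator and denominator. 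But repairing your argument that way yields $t$-dependent constants, not the advertised $\frac{C_0}{\sqrt{n_a v_*}}\frac{1+b^*}{a_*}$.)

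The paper avoids $t$-dependent weights entirely. Inside $\abs{\mathbb{P}(\widetilde{Z} - t\widetilde{W} \le 0) - \mathbb{P}(Z - tW \le 0)}$ it swaps one variable at a time, bounding $\abs{\mathbb{P}(\widetilde{Z} \le t\widetilde{W}) - \mathbb{P}(Z \le t\widetilde{W})}$ and then $\abs{\mathbb{P}(W \ge Z/t) - \mathbb{P}(\widetilde{W} \ge Z/t)}$, so that Lemma~\ref{lem:bernoulli_sum} is only ever invoked with the \emph{fixed} weight vectors $(a_i)$ and $(b_i)$. This produces $\frac{C_0}{\sqrt{n_a v_*}}\frac{1+a^*}{2a_*} + \frac{C_0}{\sqrt{n v_*}}\frac{1+b^*}{2b_*}$, which the paper then collects into the stated first summand. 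That one-variable-at-a-time substitution is precisely the step your proof is missing; without it (or a genuinely uniform-in-$t$ third-moment Berry--Esseen argument with different constants), the claimed bound does not follow from your sketch. A smaller loose end: for $t$ outside $[0,1]$ your argument compares $\widetilde{G} \in \{0,1\}$ against $G$, which is only \emph{approximately} $0$ or $1$ there; that discrepancy is a Gaussian tail term needing its own bound, whereas the paper's inequalities hold for arbitrary $t$ with no case split.
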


\begin{algorithm}
\caption{Performance Estimation by Multiple Imputation - Gaussian Approximation (PEMI-Gauss)}
\label{alg:PEMI-Gauss}
\footnotesize
\begin{algorithmic}[1]
\Require Metric estimator $\widehat{Q}_n$, nonmissing labels $\{y_i \mid i \in \mathcal{K}\}$, missing Bernoulli parameters $\{p_i \mid i \in \overline{\mathcal{K}}\}$.
\Ensure Estimated CDF $\widehat{F}$
    \State Compute Gaussian params $\mu$, $\sigma^2$ using Table~\ref{tab:prob_queries}, Table~\ref{tab:prob_queries_estimators}, Lemma~\ref{lem:cm_mean_cov}, Theorem~\ref{thm:ratio_sum_bernoulli} and Remark~\ref{rem:roc_auc_mean_cov}.
    \State $\widehat{F} \gets \qty(t \mapsto \Phi(\frac{t - \mu}{\sigma}))$  \Comment{$\Phi$ is the standard normal cdf}
\State \Return{$\widehat{F}$}
\end{algorithmic}
\end{algorithm}

The bound in Theorem~\ref{thm:ratio_sum_bernoulli} is a finite-sample bound.
It is not immediately obvious from the bound what the asymptotic behaviour is from the functional form.
The assumptions are too general to provide the asymptotic behaviour on all cases -- consider, e.g. $p_i \to 0$ for some $i \in [n]$, which can make the bound arbitrarily large.
Nonetheless, as a coherence check, we can make several small assumptions and recover $O(n^{-1/2})$ aymptotic behaviour, as shown in Corollary~\ref{cor:o_n_half_bound} (Appendix).  

\paragraph{Gaussian Approximation for ROC-AUC.}

For ROC-AUC, the probabilistic query is also a ratio of sums of Bernoullis (see Eq.~\ref{eq:roc_auc_estimator_missing} in Appendix for details), but now there is dependence between the elements of the sums, such that we cannot directly invoke Berry-Esseen type bounds as in Lemma~\ref{lem:bernoulli_sum}.
As a corollary the bound of Theorem~\ref{thm:ratio_sum_bernoulli} doesn't necessarily hold.
Under certain mixing properties~\citep{bradley2007introduction,dedecker2007weak, doukhan2012mixing} likely satisfied here, one can derive similar convergence bounds -- see the paper by~\citet{Tikhomirov1981} for a classical example.
For the ROC-AUC example this is very involved and delegated to future work.
Nonetheless, we use the mean and variance computed in Remark~\ref{rem:roc_auc_mean_cov} (Appendix).

\paragraph{PEMI-Gauss Algorithm.}

We formalise approximating the limiting behaviour of the PEMI algorithm in Algorithm~\ref{alg:PEMI-Gauss}, Performance Estimation by Multiple Imputation - Gaussian Approximation (PEMI-Gauss).
PEMI-Gauss works with guarantees for any CM-based performance metric, but may also be applied whenever a mean and variance can be computed. 
PEMI-Gauss offers advantages over the empirical cdf provided by PEMI in terms of simplicity, interpretability, analytical convenience, and computational efficiency, in that we no longer generate $B \cdot (n - k)$ random bits.

\begin{figure}
    \centering
    \includegraphics[width=0.435\textwidth]{./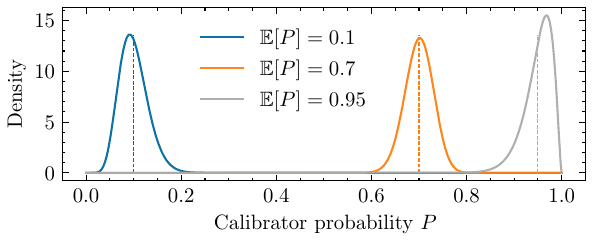}
    \vspace*{-3mm}
    \caption{Illustration of flexibility afforded by Theorem~\ref{thm:robustness}. For $\mathbb{V}[P] = 0.0009$, we show the allowed variation in a calibrator's output $P$ when the expectation $\mathbb{E}[P]$ is fixed to the given true Bernoulli probabilities $p \in \{0.1, 0.7, 0.95\}$.}
    \label{fig:robustness}
\end{figure}

\paragraph{Robustness.}

The performance guarantees offered by PEMI-Gauss to CM-based classification metrics implicitly assume that we have \emph{perfectly calibrated} Bernoulli parameters, which is unrealistic for real-world use cases.
We are able to prove that the same guarantees hold when the calibrated Bernoulli parameters are subject to a realistic noise model.
\begin{restatable}[Robustness]{theorem}{robustness}
\label{thm:robustness}
    The conclusion of Theorem~\ref{thm:ratio_sum_bernoulli} remains valid if we relax the assumption that $Y_i \sim \mathcal{B}(p_i)$ for $p_i \in (0, 1)$ to the following: the random variables $Y_i \sim \mathcal{B}(P_i)$, where $P_i \sim \operatorname{Beta}(\alpha_i, \beta_i)$ such that $\mathbb{E}[P_i] = p_i$ and $\mathbb{V}[P_i] < v_*$ for all $i \in [n]$.
\end{restatable}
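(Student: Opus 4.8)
The plan is to exploit the elementary fact that a Beta--Bernoulli compound distribution collapses back to an ordinary Bernoulli, so that the relaxed hierarchical model has \emph{exactly} the same marginal law as the original one; Theorem~\ref{thm:ratio_sum_bernoulli} then applies verbatim with no change to the bound. In other words, I expect to prove robustness not by re-deriving a perturbed bound, but by showing there is nothing to perturb.

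First I would marginalise out the latent calibration noise. Conditioning on $P_i$ and applying the tower property gives, for each $i \in [n]$,
\[
\mathbb{P}[Y_i = 1] = \mathbb{E}\big[\mathbb{P}[Y_i = 1 \mid P_i]\big] = \mathbb{E}[P_i] = p_i,
\]
so marginally $Y_i \sim \mathcal{B}(p_i)$: the Beta prior integrates out and leaves only its mean. I would then check that mutual independence survives marginalisation. Since the $P_i$ are drawn independently across $i$ and each $Y_i$ depends on its own $P_i$ alone, the joint mass function factorises as $\mathbb{E}\big[\prod_i P_i^{y_i}(1-P_i)^{1-y_i}\big] = \prod_i \mathbb{E}\big[P_i^{y_i}(1-P_i)^{1-y_i}\big]$, recovering exactly the product-of-Bernoulli law assumed in Theorem~\ref{thm:ratio_sum_bernoulli}.

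The crucial and mildly counterintuitive step is that the injected noise does \emph{not} inflate the moments that feed the bound. By the law of total variance,
\[
\mathbb{V}[Y_i] = \mathbb{E}\big[P_i(1-P_i)\big] + \mathbb{V}[P_i] = \big(p_i(1-p_i) - \mathbb{V}[P_i]\big) + \mathbb{V}[P_i] = p_i(1-p_i),
\]
the two $\mathbb{V}[P_i]$ terms cancelling exactly. Consequently every quantity appearing in the statement of Theorem~\ref{thm:ratio_sum_bernoulli} -- $\mu_z, \sigma_z^2, \mu_w, \sigma_w^2, \rho, \mu, \sigma$ and $v_*$ -- is numerically unchanged, and since the full law of $\widetilde{Z}/\widetilde{W}$ is identical, the KS bound transfers with no modification. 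I would close by recording the role of the hypothesis $\mathbb{V}[P_i] < v_*$: matching a $\operatorname{Beta}(\alpha_i, \beta_i)$ to mean $p_i$ and variance $V$ forces $\alpha_i + \beta_i = p_i(1-p_i)/V - 1$, so $\mathbb{V}[P_i] < v_* \le p_i(1-p_i)$ is precisely what guarantees $\alpha_i, \beta_i > 0$ and hence that the noise model is well-posed for every $i$ simultaneously. The only real obstacle is this moment cancellation -- one must resist the natural intuition that calibrator uncertainty ought to widen the predictive distribution; the Beta--Bernoulli structure ensures it does not.
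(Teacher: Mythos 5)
Your proposal is correct and follows essentially the same route as the paper's own proof: both reduce the hierarchical model to the original setting by observing that the Beta--Bernoulli compound collapses marginally to $\mathcal{B}(p_i)$ (you via the tower property, the paper via the known identity $\mathcal{B}(P) \equiv \mathcal{B}(\alpha/(\alpha+\beta))$), and both identify $\mathbb{V}[P_i] < v_* \le p_i(1-p_i)$ as exactly the condition making the Beta parameters $\alpha_i, \beta_i$ positive and the noise model well-posed. Your additional checks (independence surviving marginalisation, the law-of-total-variance cancellation) are sound and make explicit what the paper leaves implicit, but they do not change the substance of the argument.
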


Theorem~\ref{thm:robustness} is a formal assertion that using PEMI-Gauss with a calibrator that is correct \emph{on average} results in correct predictive distributions. 
As a realistic illustration: suppose that in the evaluation set the smallest (resp. largest) true Bernoulli parameter is $p_* = 0.0001$ (resp. $p^* = 0.9999$). 
The maximum variance allowed by Theorem~\ref{thm:robustness} of the underlying Beta distributions is therefore $v_* = 0.000999$. 
For a Bernoulli parameter $p$, we can plot the pdf (probability density function) of the distribution of $P \sim \operatorname{Beta}(\alpha_p, \beta_p)$ corresponding to imposing $\mathbb{E}[P] = p$ and $\mathbb{V}[P] = 0.0009 < v_*$ (which uniquely defines $\alpha_p$, $\beta_p$).
Consider the input data $X_p = \{ X \mid \mathbb{P}[Y = 1 \mid X,\ M = 1] \approx p\}$, that has missing labels and true Bernoulli parameter $p$. 
If the calibrated predictions follow the distribution of the random variable $P$ over $X_p$ then the guarantee of Theorem~\ref{thm:robustness} holds.
We plot in Figure~\ref{fig:robustness} the density of $P$ for several values of $p$ and note that the spread of the distributions is quite generous, that is, the calibrator can err significantly and Theorem~\ref{thm:robustness} ensures the distribution output by PEMI-Gauss has high fidelity.

\section{EXPERIMENTS}\label{sec:experiments}
Recall that PEMI and PEMI-Gauss return a predictive distribution for estimators $\widehat{Q}_n^{(S)}$ when labels are missing in the evaluation set $D_n$.
We study the quality of these algorithms along the following dimensions:
\begin{enumerate}
    \item Centre and shape of predictive distribution.
    \item Effectiveness across missingness mechanisms.
\end{enumerate}
\vspace*{-2mm}
\paragraph{Setup.}
For a given dataset $D = \{ \langle x_1, y_1 \rangle, \ldots , \langle x_N, y_N \rangle \}$ we partition randomly into 10 stratified folds $D^{1}, \ldots , D^{10}$, that is, the fraction of positive labels $N_+ / N$ is the same in each fold $D^u$ and the same as in $D$.
Within a training set $D \setminus D^{u}$, 90\% of the data is randomly chosen to train an XGBoost~\citep{xgboost} model\footnote{Default hyperparameters are used with Categorical mode enabled.} $\hat{y}$ and the remaining 10\% is chosen to fit a scaling-Binning calibrator~\citep{kumar2019calibration} $\hat{c}: [0, 1] \to [0, 1]$ with 10 bins (default).
Each test fold $D^u$ is subsequently split into two further random (non-stratified) sub-folds, $D^{u, 1}$ and $D^{u, 2}$.
The same model trained and calibrated on $D \setminus D^u$ is evaluated on two test sets: \emph{i}. $D^u$ with labels missing from $D^{u, 1}$; and \emph{ii.} $D^u$ with labels missing from $D^{u, 2}$.
For a given dataset this gives 20 replications in total and ensures that the missing labels are independent of one another.

We consider missingness proportions $p_m \in \{0.1, 0.2, 0.3\}$, masking labels in two settings:
\emph{i.} \textbf{MCAR.} Randomly sample a fraction $p_m$ from the relevant sub-fold  $D^{u, \,\cdot\,}$;
\emph{ii.} \textbf{MNAR.} Randomly sample a fraction $p_m$ from the relevant sub-fold $D^{u, \,\cdot\,}$, ensuring a fraction $\eta \in (0, 1)$ has a positive label.
For masked datapoints $i \in \overline{\mathcal{K}}$, we reserve the true labels $y_i$ for evaluation and set $y^*_i = \NA$ in the test sets $D^{u, \,\cdot\,}$. 
For the MNAR experiments, we independently consider $\eta \in \{ 0.1, 0.2, 0.4, 0.6, 0.8, 0.9 \}$.

For each sub-fold $D^{u, \,\cdot\,}$, there are three ways we choose the Bernoulli parameters input to PEMI and PEMI-Gauss for all $i \in \overline{\mathcal{K}}$. 
\begin{enumerate}
    \item Set $p_i = \frac{1}{2}$ (MaxEnt -- no testable constraints);
    \item Set $p_i = \frac{N_+}{N}$  (MaxEnt -- fixed mean);
    \item Set $p_i = \hat{c}(\hat{y}(x_i))$ (calibrated);
\end{enumerate}

The six datasets under consideration are \textsf{Dota2 Games Results}~\citep{dota2_games_results_367}, \textsf{IMDB.drama}~\citep{IMDB.drama, IMDB.drama.permission}, \textsf{Bank Marketing}~\citep{bank_marketing_222}, \textsf{Diabetes}~\citep{diabetes_34}, \textsf{German Credit}~\citep{statlog_(german_credit_data)_144} and \textsf{Adult}~\citep{adult_2}.


\paragraph{Evaluation.}
We turn to the forecasting literature for assessing the predictive distribution of $\widehat{Q}_n^{(S)}$.
Probability Integral Transform (PIT) is the idea that given a random variable $U$ with cdf $F_U$, the random variable $V = F_U(U)$ has a standard uniform distribution, which we denote by $\mathcal{U}[0, 1]$.
\cite{Diebold1998} show that for any sequence of random variables $T_1, T_2, ...$ with no forward dependence -- formally,
$T_r \perp \langle T_{r+1}, T_{r+2}, \ldots \rangle \mid \langle T_1, T_2, \ldots, T_r \rangle$ for all $r$ -- then $F_{T_1}(T_1), F_{T_2}(T_2), \ldots \sim \mathcal{U}[0, 1]$ independently.
This result is used to evaluate forecast distributions $\widehat{F}_{r}$, by evaluating on realisations of $T_r$, that is, computing $\widehat{F}_{r}(t_r)$ and comparing the empirical cdf (aggregated over $r$) visually against the cdf of a $\mathcal{U}[0, 1]$ random variable\footnote{A plot $x = y$ on the Cartesian plane with $x, y \in [0, 1]$.}.
Let $\widehat{Q}^u$ be the estimator $\widehat{Q}(\hat{y})$ evaluated on $D^u$, i.e. the ground truth for fold $u$.
From~\citet{Diebold1998} we expect the correct distribution of $\widehat{Q}_n^{(S)}$ evaluated at the ground truth values, $F_{\widehat{Q}_n^{(S)}}(\widehat{Q}^u)$, to be distributed according to $\mathcal{U}[0, 1]$, for any dataset, fold or metric.

For metrics \{Precision, Recall, Accuracy, $F_1$-score, ROC-AUC\} we compute 120 ground-truth quantities $\widehat{Q}^u$.
Then, for each predictive distribution $\widehat{F}_{\mathcal{A}}$ provided by algorithm $\mathcal{A}$ we evaluate $\widehat{F}_{\mathcal{A}}(\widehat{Q}^u)$ and record the values in an empirical cdf $\widehat{F}_{\mathcal{A}}^{\text{PIT}}$.
The distance of $\widehat{F}_{\mathcal{A}}^{\text{PIT}}$ is then measured against $\mathcal{U}[0, 1]$ in the following ways: Wasserstein-1 ($W_1$) distance; and KS distance,
where for real-valued r.v.s $U$, $V$ we have
\begin{equation*}
\textstyle\mathrm{d}_{W_1}(F_U, F_V) = \int_{-\infty}
^{+\infty} \abs{F_U(t) - F_V(t)} \dd{t}. 
\end{equation*}
The $W_1$ and KS distances capture the quality of the shape of the predictive distributions.
We also measure the central tendency of the predicted distribution by root mean-squared error (RMSE) and mean absolute error (MAE).
For all distances lower values are better.

\paragraph{Baselines.}

We are not aware of any prior works studying this problem, so as a baseline we use the bootstrap distribution of the estimator $\widehat{Q}^u$ for a given metric on the nonmissing data only, with $B = 10000$ replicates as recommended by~\citet{Hesterberg2015s}.

\paragraph{Results.}

\begin{table}[t]
\centering
        {
    \caption{(Distribution shape) $\mathrm{d}_{W_1}(\widehat{F}^{\text{PIT}}_{\mathcal{A}}, \mathcal{U}[0, 1])$ for different algorithms in MCAR setting, $p_m = 0.3$.
        Best algorithm in \textbf{bold}.}
        \label{tab:W_1_tab_p_0.3_basic}
        \resizebox{\columnwidth}{!}{%
        \input{tables/W_1_df_p_0.3_alpha_0.9.tex}
        }
        }
\end{table}

\begin{table}[t]
\centering
        {
        \caption{(Distribution central tendency) MAE of centre of predictive distributions in MCAR setting, $p_m = 0.3$. Best algorithm in \textbf{bold}.}
        \label{tab:mae_tab_p_0.3_basic}
        \resizebox{\columnwidth}{!}{%
        \input{tables/df_mae_p_0.3_alpha_0.9.tex}
        }
        }
\end{table}

In Tables~\ref{tab:W_1_tab_p_0.3_basic}~and~\ref{tab:mae_tab_p_0.3_basic} we show the $W_1$ and MAE fidelity measures for the MCAR setting when the fraction of missing labels $p_m$ is 30\% of the test set.
In all cases we see that either PEMI or PEMI-Gauss with calibrated $\{p_i\}$ is the best choice.
Tables~\ref{tab:W_1_tab_p_0.3_conf}~and~\ref{tab:mae_tab_p_0.3_conf} (Appendix) show the calibrated multiple-imputation based methods to be mostly overlapping at the $\alpha=0.9$ confidence level.
Note that the Gaussian approximation works well, in that the numbers for PEMI-Gauss are similar or better than the PEMI.
The effect of poor calibration on the multiple imputation methods is stark, showing significantly diminished performance as compared with their calibrated counterparts.
In the MCAR setting, we would expect that performance measures focusing on the location of the predictive distribution, such as MAE and MSE, will show little difference between the PEMI methods and bootstrap. This is because there is no bias in the missing data.
On the other hand, we would expect that for measures considering shape also -- namely KS and $W_1$ distance -- for there to be some impact on performance due to ignoring missing data. Indeed this is largely borne out in the tables in Appendix Section~\ref{sec:mcar_tables}.

\begin{figure*}
    \centering
    \includegraphics[width=0.99\textwidth]{./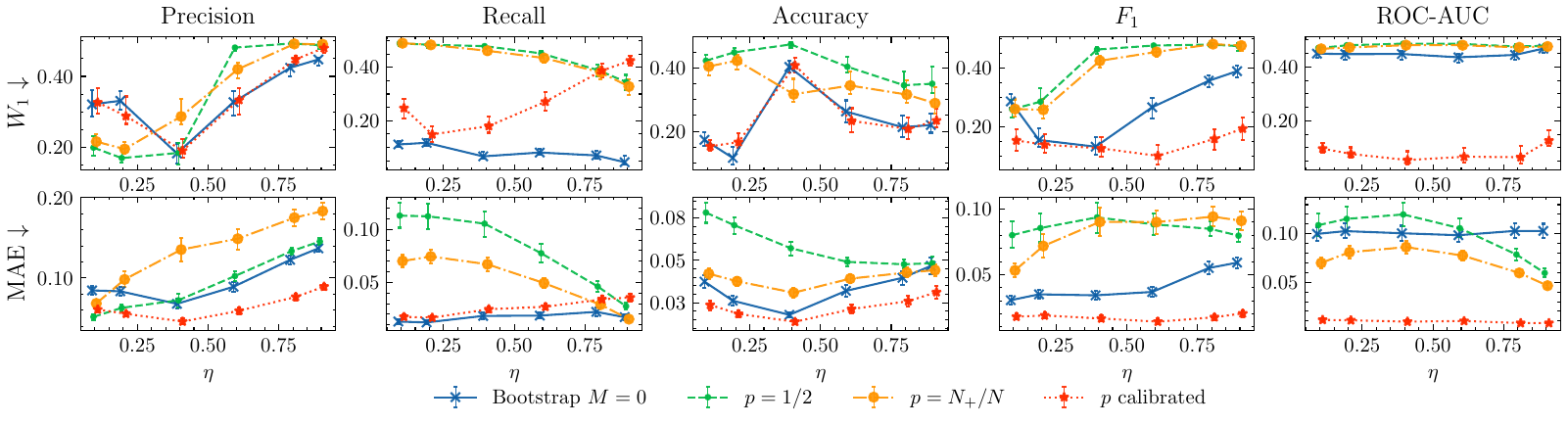}
    \caption{Effect of varying MNAR class imbalance $\eta \in (0, 1)$ on fidelity using PEMI-Gauss and baseline for $p_{m} = 0.3$. Each column represents a metric estimator, with the top row showing $W_1$-distance and bottom row showing MAE. Error bars are bootstrapped confidence intervals at the $\alpha=0.9$ level. Fidelities are on $y$-axis (lower is better) with $\eta$ varying on the $x$-axis; $x$-axis values are slightly jittered to help separate each series visually.}
    \label{fig:mnar_seq_plots}
\end{figure*}

Figure~\ref{fig:mnar_seq_plots} shows the effect of varying the MNAR class imbalance $\eta$ on the predictive quality of PEMI-Gauss and the bootstrap distribution the for $W_1$ and MAE distances.
There is an exception for the Precision metric and $W_1$-distance with $\eta \in \{0.1, 0.2\}$ -- here the uncalibrated models outperform. 
The shape of the predictive distribution for the Precision metric using well calibrated Bernoulli probabilities is actually worse for small $\eta$ than more crude estimates for the $p_i$.
In the parlance of Section~\ref{sec:PEMI}, Precision measures TP/P. 
P is fixed by the classifier outputs $\psi_i$ so it is the shape of the distribution of $\mathrm{TP} = \sum_{i \in \mathcal{K}} y_i \psi_i + \sum_{i \in \overline{\mathcal{K}}} Y_i\psi_i$ that controls the Precision distribution. 
Here, $Y_i$ are more likely to be zero due to small $\eta$. 
The result is that the calibrated probabilities $p_i$ are overestimated and this drives the difference.
The cruder estimates of $p_i$ are less opinionated so give a smaller error.

For the Recall metric, we have the bootstrap distribution of nonmissing data outperforming PEMI-Gauss.
There is a similar reason for this as in the previous paragraph.
Due to small $\eta$ the PEMI predictive distribution is underestimating the true value. Recall is given by $\mathrm{TP}/ (\mathrm{TP} + \mathrm{FN})$.
Overestimation in the denominator dominates overestimation in the numerator so we have that the overall predictive distribution underestimates.

The gap between the Bootstrap and PEMI-Gauss is less pronounced in the MNAR setting as compared with MCAR, but still significant, for all but the Recall metric where the bootstrap outperforms.
For all estimation methods, performance in the MNAR setting is worse than MCAR as we may expect.

We show the results of these experiments in full for $p_m \in \{0.1, 0.2, 0.3\}$, alongside results for KS distance and RMSE in Appendix~\ref{sec:appendix_expt}.
The data are qualitatively similar and the same conclusions can be drawn.

\section{CONCLUSION}\label{sec:conclusion}

In this paper we demonstrate -- via the PEMI and PEMI-Gauss algorithms -- that multiple imputation can be successfully applied to model evaluation when test-time labels are missing.
Crucially, the theoretical and empirical results in Sections~\ref{sec:approx_dist}~and~\ref{sec:experiments} show that a well-calibrated imputation model is key to the success of this approach.
We observe that state-of-the-art calibrators are sufficient for this purpose.
However, PEMI and PEMI-Gauss are more effective with certain classification metrics, so we caution users of this method to evaluate on their own use cases prior to deployment.

For future work, there are interesting directions including, but not limited to:
\begin{itemize}
    \item Greater breadth of binary classification metrics (especially rank-based) and proving guarantees for these.
    \item Extension to multi-class classification. 
    We might consider extending from $Y_i$ being Bernoulli distributed to multinoulli (categorical) distributed.
    For the robustness results, we used the conjugate prior to the Bernoulli distribution, the Beta distribution. In a similar way we might consider the conjugate to the multinoulli, that is, the Dirichlet distribution.
    On the evaluation metrics themselves, their multi-class variants are typically treated the same as their binary counterparts, using a ``one vs rest'' logic.
    \item Expanding PEMI to regression models and relevant metrics.
    Here, the continuous, potentially unbounded nature of the label space $\mathcal{Y}$ may present some difficulties.
\end{itemize}

{\small
\paragraph{Disclaimer.}
This paper was prepared for informational purposes by the Artificial Intelligence Research group of JPMorgan Chase \& Co. and its affiliates (``JP Morgan'') and is not a product of the Research Department of JP Morgan. JP Morgan makes no representation and warranty whatsoever and disclaims all liability, for the completeness, accuracy or reliability of the information contained herein. This document is not intended as investment research or investment advice, or a recommendation, offer or solicitation for the purchase or sale of any security, financial instrument, financial product or service, or to be used in any way for evaluating the merits of participating in any transaction, and shall not constitute a solicitation under any jurisdiction or to any person, if such solicitation under such jurisdiction or to such person would be unlawful.
}

\bibliographystyle{ACM-Reference-Format}
\bibliography{references}

\appendix
\onecolumn
\section{Probabilistic Queries}
\label{sec:prob_queries}

Within this work we invoke the concept of a probabilistic query, which we use in the same manner as~\citet{Mohan2014}. 
Since this is not well-known, we describe it briefly here.

Suppose we have random variables $X_1, X_2, \ldots$.
A \emph{primitive probabilistic query} $P : X_1, X_2, \ldots \to [0, 1]$ describes a new random variable corresponding to a question we wish to ask about the random variables, that has a probability as an answer.
For example, $P(X_1 = X_2)$ is the query corresponding to the question \emph{``what is the probability that r.v. $X_1$ equals r.v. $X_2$?''}.
We can combine primitive probabilistic queries using elementary algebra to obtain a general probabilistic query, e.g., 
$$\frac{P(X_1 = X_2)}{P(X_2 < X_3)}.$$

Primitive probabilistic queries relate to \emph{probabilities} in that under a distribution $\mathcal{D}$, the expectation of the primitive query $P$ equals the probability $\mathbb{P}$, that is,
$$\mathbb{E}_{\mathcal{D}}[P(E)] = \mathbb{P}_{\mathcal{D}}[E] \text{ for all events } E.$$

This is a notational convenience to distinguish the mathematical form of a probabilistic question $P(E)$ from the actual realised numerical probability $\mathbb{P}[E]$, allowing the same query to be evaluated under different distributions.
For example, consider distributions $\mathcal{D}_1$ and $\mathcal{D}_2$ such that $\mathcal{D}_1 \neq \mathcal{D}_2$. The probabilistic query $P(X_1 = X_2)$ can be discussed as an object separate to the probabilities $\mathbb{P}_{\mathcal{D}_1}[ X_1 = X_2] \neq \mathbb{P}_{\mathcal{D}_2}[X_1 = X_2]$.

\section{Proofs and Ancillary Results}
\label{sec:proofs}

For a positive integer $n \in \mathbb{N}$, let $[n] = \{1, 2, \ldots, n\}$.
A function $f(x) = O(g(x))$ if there exist $B > 0$, $x_0 \in \mathbb{R}$ such that $\abs{f(x)} \leq B g(x)$ for all $x \geq x_0$.

\subsection{Vanilla Accuracy Estimator}

\begin{proposition}\label{prop:Q_acc}
The vanilla accuracy estimator, 
$$\widehat{Q}_{\text{acc},n} = \frac{1}{n}\sum_{i \in [n]} \mathds{1}\{y_i = \psi(x_i)\},$$
is both an unbiased and consistent estimator of the performance metric $\mathcal{Q}_{\text{acc}} = \mathbb{E}_{\mathcal{D}}[P(\psi(X) = Y)]$.
\end{proposition}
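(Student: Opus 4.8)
The plan is to establish both properties directly from the iid structure of $D_n$ together with the identity relating probabilistic queries to probabilities. First I would unpack the target quantity: since $\mathbb{E}_{\mathcal{D}}[P(E)] = \mathbb{P}_{\mathcal{D}}[E]$ for every event $E$ (as recorded in Appendix~\ref{sec:prob_queries}), the performance metric simplifies to $\mathcal{Q}_{\text{acc}} = \mathbb{P}_{\mathcal{D}}[\psi(X) = Y]$. Writing $Z_i := \mathds{1}\{y_i = \psi(x_i)\}$, these summands are iid copies of the Bernoulli variable $Z := \mathds{1}\{Y = \psi(X)\}$, because the pairs $\langle x_i, y_i\rangle$ are iid draws from the $\mathcal{X}\times\mathcal{Y}$ marginal of $\mathcal{D}$ and $\psi$ is a fixed deterministic map.

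For unbiasedness I would simply apply linearity of expectation. Each term satisfies $\mathbb{E}[Z_i] = \mathbb{P}_{\mathcal{D}}[Y = \psi(X)] = \mathcal{Q}_{\text{acc}}$, so $\mathbb{E}_{\mathcal{D}}[\widehat{Q}_{\text{acc},n}] = \tfrac{1}{n}\sum_{i\in[n]}\mathbb{E}[Z_i] = \mathcal{Q}_{\text{acc}}$, which is the first claim with no further work.

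For consistency I would use that the $Z_i$ are iid and bounded in $[0,1]$, hence have common variance at most $\tfrac14$. Then $\mathbb{V}[\widehat{Q}_{\text{acc},n}] = \mathbb{V}[Z]/n \leq 1/(4n)$, and Chebyshev's inequality yields $\mathbb{P}_{\mathcal{D}}\qty[\abs{\widehat{Q}_{\text{acc},n} - \mathcal{Q}_{\text{acc}}} > \epsilon] \leq 1/(4 n \epsilon^2)$, which tends to $0$ as $n\to\infty$ for every fixed $\epsilon > 0$. Equivalently, the weak law of large numbers applies verbatim to the iid bounded sequence $\{Z_i\}$.

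I do not expect a genuine obstacle here, as the statement is a textbook consequence of the iid assumption; the only point needing care is the bookkeeping translation between the probabilistic-query object $P(\psi(X)=Y)$ and the realised probability $\mathbb{P}_{\mathcal{D}}[\psi(X)=Y]$, together with the observation that $\psi$ is fixed in advance rather than estimated from $D_n$ — this is precisely what makes the summands genuinely iid and the estimator unbiased for the population quantity.
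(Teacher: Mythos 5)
Your proposal is correct and follows essentially the same route as the paper's proof: linearity of expectation combined with the probabilistic-query identity $\mathbb{E}_{\mathcal{D}}[P(E)] = \mathbb{P}_{\mathcal{D}}[E]$ for unbiasedness, and an $O(1/n)$ variance bound followed by Chebyshev's inequality for consistency. The only cosmetic difference is that you bound the Bernoulli variance by $\tfrac{1}{4}$ whereas the paper writes it exactly as $\mathbb{P}[Y = \psi(X)](1 - \mathbb{P}[Y = \psi(X)])$; both yield the same conclusion.
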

\begin{proof}
\emph{Unbiasedness.}
\begin{align*}
\mathbb{E}\qty[\widehat{Q}_{\text{acc},n}] &= \mathbb{E}\qty[\frac{1}{n}\sum_{i \in [n]} \mathds{1}{\{y_i = \psi(x_i)\}}] \\
&= \frac{1}{n}\sum_{i \in [n]} \mathbb{E}\qty[\mathds{1}{\{y_i = \psi(x_i)\}}] \tag{linearity of expectation}\\
&= \frac{1}{n}\sum_{i \in [n]} \mathbb{P}\qty[Y = \psi(X)] \\
&= \frac{1}{n}\sum_{i \in [n]} \mathbb{E}\qty[P(Y = \psi(X))] \tag{Definition of probabilistic query}\\
&= \mathbb{E}[P(Y = \psi(X))] = \mathcal{Q}_{\text{acc}}.
\end{align*}
\emph{Consistency.}
\begin{align*}
\mathbb{V}\qty[\widehat{Q}_{\text{acc},n}] &= \mathbb{V}\qty[\frac{1}{n}\sum_{i \in [n]} \mathds{1}{\{y_i = \psi(x_i)\}}] \\
&= \frac{1}{n^2}\sum_{i \in [n]} \mathbb{V}\qty[\mathds{1}{\{y_i = \psi(x_i)\}}] \tag{$Y_i$ are mutually independent}\\
&= \frac{1}{n^2}\sum_{i \in [n]} \mathbb{P}[Y = \psi(X)](1 - \mathbb{P}[Y = \psi(X)]) \\
&=\frac{1}{n}\mathbb{P}[Y = \psi(X)](1 - \mathbb{P}[Y = \psi(X)]).
\end{align*}
As $\mathbb{V}\qty[\widehat{Q}_{\text{acc},n}] = O(1/n)$, we have that $\widehat{Q}_{\text{acc},n}$ is a consistent estimator of $\mathcal{Q}_{\text{acc}}$ from Chebyshev's inequality.
\end{proof}

\subsection{Confusion Matrix Estimators}

\begin{lemma}
\label{lem:cm_mean_cov}
Suppose we have missing evaluation data ($\overline{\mathcal{K}} \neq \emptyset$).
The mean and covariance of the confusion matrix element estimators $(\widehat{\text{TP}}, \widehat{\text{FN}}, \widehat{\text{FP}}, \widehat{\text{TN}})$ are 
\NiceMatrixOptions
  {
    code-for-first-col = {\color{blue}\scriptstyle} ,
    code-for-first-row = {\color{blue}\scriptstyle} 
  }
\begin{align*}
    \mu_{\text{CM}} &=
    \begin{pNiceMatrix}[first-col,first-row]
      &  &  \\
    \text{TP} & \sum_{i \in \mathcal{K}} y_i \psi_i + \sum_{i \in \overline{\mathcal{K}}} p_i \psi_i\\
    \text{FN} & \sum_{i \in \mathcal{K}} y_i \overline{\psi_i} + \sum_{i \in \overline{\mathcal{K}}} p_i \overline{\psi_i} \\
    \text{FP} & \sum_{i \in \mathcal{K}} \overline{y_i} \psi_i + \sum_{i \in \overline{\mathcal{K}}} (1 - p_i) \psi_i \\
    \text{TN} & \sum_{i \in \mathcal{K}} \overline{y_i} \overline{\psi_i} + \sum_{i \in \overline{\mathcal{K}}} (1 - p_i) \overline{\psi_i} \\
    \end{pNiceMatrix}
    ,\\
    \Sigma_{\text{CM}} &= 
    \begin{pNiceMatrix}[first-col,first-row]
      & \text{TP}  & \text{FN}  & \text{FP} & \text{TN} \\
    \text{TP} &  \sum_{i \in \overline{\mathcal{K}}} p_i(1 - p_i) \psi_i & 0 & -\sum_{i \in \overline{\mathcal{K}}} p_i(1 - p_i) \psi_i & 0 \\
    \text{FN}  & 0 & \sum_{i \in \overline{\mathcal{K}}} p_i(1 - p_i) \overline{\psi_i} & 0 & - \sum_{i \in \overline{\mathcal{K}}} p_i(1 - p_i) \overline{\psi_i} \\
    \text{FP} & - \sum_{i \in \overline{\mathcal{K}}} p_i(1 - p_i) \psi_i & 0 & \sum_{i \in \overline{\mathcal{K}}} p_i(1 - p_i) \psi_i & 0 \\
    \text{TN} & 0 & - \sum_{i \in \overline{\mathcal{K}}} p_i(1 - p_i) \overline{\psi_i} & 0 & \sum_{i \in \overline{\mathcal{K}}} p_i(1 - p_i) \overline{\psi_i} \\
    \end{pNiceMatrix},
\end{align*}
where for a variable $z \in \{0, 1\}$, $\overline{z} = 1 - z$.
\end{lemma}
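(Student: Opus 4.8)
The plan is to obtain $\mu_{\text{CM}}$ by linearity of expectation and $\Sigma_{\text{CM}}$ by exploiting the mutual independence of $\{Y_i\}_{i \in \overline{\mathcal{K}}}$ together with the idempotency of the deterministic classifier outputs $\psi_i = \psi(x_i) \in \{0,1\}$. For the mean, I would simply take expectations of the four estimators in Table~\ref{tab:prob_queries_estimators}. The sums over $\mathcal{K}$ are constants, and for $i \in \overline{\mathcal{K}}$ we have $\mathbb{E}[Y_i] = p_i$ and $\mathbb{E}[\overline{Y_i}] = 1 - p_i$ from~\eqref{eq:bern_labels}; linearity then yields each entry of $\mu_{\text{CM}}$ directly.

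For the covariance, the first observation is that the known sums contribute nothing, since constants have zero (co)variance, so only the missing-index sums matter. I would group the random part of the confusion matrix as a sum of independent per-index vectors $v_i = (Y_i \psi_i,\ Y_i \overline{\psi_i},\ \overline{Y_i}\psi_i,\ \overline{Y_i}\,\overline{\psi_i})^{\mathsf T}$ for $i \in \overline{\mathcal{K}}$. Because distinct $v_i$ depend on disjoint, mutually independent Bernoullis, $\Sigma_{\text{CM}} = \sum_{i \in \overline{\mathcal{K}}} \operatorname{Cov}(v_i)$, reducing the task to computing a single $\operatorname{Cov}(v_i)$. The key simplification is that $\psi_i \in \{0,1\}$ gives $\psi_i^2 = \psi_i$ and $\psi_i \overline{\psi_i} = 0$: fixing the value of $\psi_i$ collapses $v_i$ onto just two active coordinates, namely $\{\text{TP}, \text{FP}\}$ when $\psi_i = 1$ and $\{\text{FN}, \text{TN}\}$ when $\psi_i = 0$. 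Using the elementary identities $\mathbb{V}[Y_i] = \mathbb{V}[\overline{Y_i}] = p_i(1-p_i)$ and $\operatorname{Cov}(Y_i, \overline{Y_i}) = -\mathbb{V}[Y_i] = -p_i(1-p_i)$, each $\operatorname{Cov}(v_i)$ then exhibits exactly the claimed $\pm p_i(1-p_i)$ pattern, weighted by $\psi_i$ on the $\{\text{TP},\text{FP}\}$ block and by $\overline{\psi_i}$ on the $\{\text{FN},\text{TN}\}$ block. Summing over $i \in \overline{\mathcal{K}}$ reproduces $\Sigma_{\text{CM}}$ entry by entry.

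There is no genuine obstacle here; the result is essentially bookkeeping once the per-index decomposition is in place. The only point requiring a little care is verifying that all cross-block entries (TP--FN, TP--TN, FP--FN, FP--TN) vanish, but this follows immediately from $\psi_i \overline{\psi_i} = 0$, since for each fixed $i$ at most one of the two blocks is active, so no single Bernoulli ever couples a TP/FP coordinate to an FN/TN coordinate. I would invoke this identity directly rather than enumerate all sixteen matrix entries, and exploit the symmetry of $\Sigma_{\text{CM}}$ to halve the remaining computation.
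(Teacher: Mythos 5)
Your proposal is correct and follows essentially the same route as the paper, whose proof is just a one-line appeal to linearity of expectation, covariance of linear combinations, the Bernoulli moments $\mathbb{E}[Y_i]=p_i$, $\mathbb{V}[Y_i]=p_i(1-p_i)$, and independence across indices. Your write-up fills in the details the paper omits — in particular the per-index decomposition $\Sigma_{\text{CM}}=\sum_{i\in\overline{\mathcal{K}}}\operatorname{Cov}(v_i)$ and the observation that $\psi_i\overline{\psi_i}=0$, $\psi_i^2=\psi_i$ forces the cross-block entries to vanish — which is exactly the bookkeeping the paper's proof implicitly relies on.
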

\begin{proof}
Each element is a straightforward consequence of the following: linearity of expectation; covariance of linear combinations; the fact that $\mathbb{E}[Y] = p$, $\mathbb{V}[Y] = p(1 - p)$ for $Y \sim \mathcal{B}(p)$; $\operatorname{Cov}[U, V] = 0$ for independent r.v.s $U, V$.   
\end{proof}

\subsection{Sums of Bernoulli Variables}

We shall require the following classic result.

\begin{theorem}[Berry-Esseen, \citep{Esseen1956, shevtsova2010improvement}]
    Let $T_1, T_2, \ldots$ be independent random variables with $\mathbb{E}[T_i] = 0$, $\mathbb{E}[T_i^2] = \sigma_i^2 > 0$ and $\mathbb{E}[\abs{T_i}^3] = \rho_i < \infty$. 
    Also, let
    $$S_n = \frac{T_1 + T_2 + \cdots + T_n}{\sqrt{\sigma_1^2 + \sigma_1^2 + \cdots + \sigma_n^2}}$$
    and $F_{S_n}$ be the cdf of $S_n$.
    Then, the Berry-Esseen theorem states there exists a universal constant $C_0 \in [0.4097,  0.5600]$ such that $\mathrm{d}_{\mathsf{KS}}(F_{S_n}, \mathcal{N}(0, 1)) \leq C_0 \psi_0,$
    where
    $$\psi_0 = \qty(\sum^n_{i = 1} \sigma_i^2)^{-3/2} \cdot \sum^n_{i = 1} \rho_i .$$
\end{theorem}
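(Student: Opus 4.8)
The plan is to prove this via the classical Fourier-analytic (characteristic function) method, which is the standard route to Berry--Esseen bounds. Write $s_n^2 = \sum_{i=1}^n \sigma_i^2$, so that $S_n = s_n^{-1}\sum_i T_i$, and let $\phi_i(t) = \mathbb{E}[e^{\mathrm{i} t T_i}]$ denote the characteristic function of $T_i$ (with $\mathrm{i} = \sqrt{-1}$). Then the characteristic function of $S_n$ factorises as $\psi_n(t) = \prod_{i=1}^n \phi_i(t/s_n)$, and the goal is to compare it against $e^{-t^2/2}$, the characteristic function of $\mathcal{N}(0,1)$. Since the Kolmogorov--Smirnov distance is always at most $1$, I may assume $\psi_0$ is below a fixed threshold, as otherwise the claimed bound $C_0\psi_0$ holds trivially.

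The first ingredient is Esseen's smoothing inequality: for any $T>0$,
$$\mathrm{d}_{\mathsf{KS}}(F_{S_n}, \mathcal{N}(0,1)) \le \frac{1}{\pi}\int_{-T}^{T} \abs{\frac{\psi_n(t) - e^{-t^2/2}}{t}}\, \mathrm{d}t + \frac{c_1}{T},$$
where $c_1$ depends only on $\sup_x \abs{\Phi'(x)} = (2\pi)^{-1/2}$. This reduces matters to estimating the difference of characteristic functions on a bounded interval. The second ingredient is a Taylor expansion of each factor: using $\mathbb{E}[T_i]=0$, $\mathbb{E}[T_i^2]=\sigma_i^2$ and $\mathbb{E}[\abs{T_i}^3]=\rho_i$ together with the elementary estimate $\abs{e^{\mathrm{i}x} - (1 + \mathrm{i}x - x^2/2)} \le \abs{x}^3/6$, one obtains $\phi_i(u) = 1 - \tfrac{1}{2}\sigma_i^2 u^2 + r_i(u)$ with $\abs{r_i(u)} \le \rho_i\abs{u}^3/6$.

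I would then combine these factor estimates into a product estimate using the standard lemma that $\abs{\prod_i z_i - \prod_i w_i} \le \sum_i \abs{z_i - w_i}$ whenever all $\abs{z_i}, \abs{w_i}\le 1$, applied with $z_i = \phi_i(t/s_n)$ and $w_i = e^{-\sigma_i^2 t^2/(2s_n^2)}$, so that $\prod_i w_i = e^{-t^2/2}$; both families have modulus at most $1$ since characteristic functions satisfy $\abs{\phi_i}\le 1$. The Lyapunov inequality $\sigma_i^3 \le \rho_i$ gives $\max_i \sigma_i^2/s_n^2 \le \psi_0^{2/3}$, which keeps each exponent small and lets me control $\abs{z_i - w_i}$ by a constant multiple of $\rho_i \abs{t}^3/s_n^3$ on the range $\abs{t}\le T$ with $T \asymp \psi_0^{-1}$. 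Summing over $i$ yields $\abs{\psi_n(t) - e^{-t^2/2}} \le C\,\psi_0\, \abs{t}^3 e^{-t^2/3}$; dividing by $\abs{t}$ and integrating against the Gaussian-type tail produces a first term of order $\psi_0$, while the smoothing remainder $c_1/T$ is also of order $\psi_0$ by the choice of $T$. Adding the two, and invoking the trivial bound when $\psi_0$ is large, delivers $\mathrm{d}_{\mathsf{KS}}(F_{S_n}, \mathcal{N}(0,1)) \le C_0\,\psi_0$ for some universal $C_0$.

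The main obstacle is not the qualitative argument, which is classical, but extracting the sharp constant $C_0 \le 0.5600$. The crude Fourier argument above yields a universal constant far larger than $0.5600$; obtaining Shevtsova's value requires replacing the naive smoothing inequality by a sharper smoothing kernel (such as a Prawitz- or Bohman-type inequality), retaining higher-order terms in the characteristic-function expansion rather than discarding them into the remainder, and carefully optimising both the truncation parameter $T$ and the intermediate constants. Since the statement is quoted from \citet{Esseen1956, shevtsova2010improvement}, I would present the Fourier argument as establishing the existence of \emph{some} universal $C_0$ and defer the sharp value $0.5600$ to Shevtsova's refined analysis.
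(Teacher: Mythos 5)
The paper never proves this statement: it is imported as a classical result, with the admissible constant $C_0 \le 0.5600$ attributed to \citet{shevtsova2010improvement}. So there is no internal proof to compare against, and your overall strategy (Esseen smoothing plus characteristic-function expansion) is indeed the standard route in the cited literature; your decision to establish only the existence of \emph{some} universal constant and defer the sharp value to Shevtsova mirrors exactly what the paper does by citation.

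However, your sketch has one genuine gap. The product-comparison lemma you invoke, $\abs{\prod_i z_i - \prod_i w_i} \le \sum_i \abs{z_i - w_i}$ for $\abs{z_i},\abs{w_i}\le 1$, cannot by itself deliver your claimed estimate $\abs{\psi_n(t) - e^{-t^2/2}} \le C\,\psi_0\,\abs{t}^3 e^{-t^2/3}$: it produces the sum of the individual factor errors with \emph{no} Gaussian decay factor. Without that factor the smoothing integral behaves like $\int_{-T}^{T}\psi_0 t^2\,\mathrm{d}t \asymp \psi_0 T^3$; optimising $\psi_0 T^3 + c_1/T$ over $T$ then yields only $\mathrm{d}_{\mathsf{KS}} = O(\psi_0^{1/4})$ rather than $O(\psi_0)$, and with your stated choice $T \asymp \psi_0^{-1}$ the first term actually diverges as $\psi_0 \to 0$. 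The decay factor must come from the telescoping identity $\prod_i z_i - \prod_i w_i = \sum_i \bigl(\prod_{j<i} z_j\bigr)(z_i - w_i)\bigl(\prod_{j>i} w_j\bigr)$ combined with the modulus bound $\abs{\phi_i(t/s_n)} \le \exp\bigl(-\sigma_i^2 t^2/(2 s_n^2) + \rho_i \abs{t}^3/(6 s_n^3)\bigr)$, so that for $\abs{t} \le T \asymp \psi_0^{-1}$ (with a small enough implied constant) the complementary factors jointly contribute $e^{-c t^2}$. Once this is patched, integrating $\psi_0 t^2 e^{-ct^2}$ gives a first term of order $\psi_0$, the remainder $c_1/T$ is likewise $O(\psi_0)$, and the existence of a universal $C_0$ follows; the specific value $0.5600$ remains, as you correctly note, a matter for the refined analysis in the cited references.
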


\subsubsection{Proof of Lemma~\ref{lem:bernoulli_sum}}

\bernoullisum*
\begin{proof}
    We proceed with a direct application of the Berry-Esseen theorem.
    First define the random variables $T_i = \frac{a_i}{\sigma_z}(Y_i - p_i)$ and $S_n = \sum_{i = 1}^n T_i$.
    We then have 
    \begin{equation}
        \mathbb{E}[T_i] = 0,\ \mathbb{E}[T_i^2] = \frac{a_i^2}{\sigma_z^2} p_i (1 - p_i),\ \mathbb{E}[\abs{T_i}^3] = \frac{a_i^3}{\sigma_z^3} p_i (1 - p_i) (1 - 2p_i + 2p_i^2).
    \end{equation}
    Towards applying Berry-Esseen, we then evaluate
    \begin{align}
        \psi_0 &= \qty(\sum_{i = 1}^n \frac{a_i^2}{\sigma_z^2} p_i (1 - p_i))^{-3/2} \cdot \sum_{i = 1}^n \frac{a_i^3}{\sigma_z^3} p_i (1 - p_i) (1 - 2p_i + 2p_i^2) \\
        &= \qty(\frac{1}{\sigma_z^2})^{-3/2}\qty(\sum_{i = 1}^n a_i^2 p_i (1 - p_i))^{-3/2} \cdot \qty(\frac{1}{\sigma_z^3}) \sum_{i = 1}^n a_i^3 p_i (1 - p_i) \qty(\frac{1}{2} + 2(p_i - \frac{1}{2})^2) \tag{Factorising} \\ 
        &= \frac{1}{2}\qty(\sum_{i = 1}^n a_i^2 p_i (1 - p_i))^{-1/2} + 2\cdot \frac{\sum_{i = 1}^n a_i^3 p_i (1 - p_i) (p_i - \frac{1}{2})^2}{\qty(\sum_{i = 1}^n a_i^2 p_i (1 - p_i))^{3/2}} \tag{Splitting sum and $\sigma_z$ cancellation} \\
        &\leq \frac{1}{2}\qty(\sum_{i = 1}^n a_i^2 p_i (1 - p_i))^{-1/2} + \frac{1}{2}\cdot \frac{\sum_{i = 1}^n a_i^3 p_i (1 - p_i) }{\qty(\sum_{i = 1}^n a_i^2 p_i (1 - p_i))^{3/2}} \tag{Since $(p_i - \frac{1}{2})^2 \leq \frac{1}{4}$} \\ 
        &\leq \frac{1}{2}\qty(\sum_{i = 1}^n a_i^2 p_i (1 - p_i))^{-1/2} + \frac{a^{*}}{2}\cdot \frac{\sum_{i = 1}^n a_i^2 p_i (1 - p_i) }{\qty(\sum_{i = 1}^n a_i^2 p_i (1 - p_i))^{3/2}}  \tag{Since $a_i \leq a^{*}$ for all $i \in [n]$}\\
        &= \frac{1 + a^{*}}{2}\qty(\sum_{i = 1}^n a_i^2 p_i (1 - p_i))^{-1/2} \\
        &\leq \frac{1 + a^{*}}{2}\qty(\sum_{i = 1}^n a_{*}^2 p_i (1 - p_i))^{-1/2} \tag{Since $a_i^2 \geq a_{*}^2$ for all $i \in [n]$} \\
        &= \frac{1 + a^{*}}{2 a_{*}}\qty(\sum_{i = 1}^n  p_i (1 - p_i))^{-1/2}.\label{eq:bernoulli_1} 
    \end{align}
    Consider that
    \begin{align}
    \sum_{i = 1}^n p_i (1 - p_i)\geq \sum_{i = 1}^n v_{*} =n v_{*} \\
    \Longleftrightarrow \qty(\sum_{i = 1}^n p_i (1 - p_i))^{- 1/2} \leq (n v_{*})^{- 1/2} \label{eq:bernoulli_2}
    \end{align}
    Substituting~\eqref{eq:bernoulli_1}~and~\eqref{eq:bernoulli_2} into the Berry-Esseen theorem yields $d_{\mathsf{KS}}(F_{S_n}, \mathcal{N}(0, 1)) \leq \frac{C_0}{\sqrt{n v_{*} }} \frac{1 + a^{*}}{2 a_{*}}$.
    This implies $\abs{\mathbb{P}(S_n \leq t) - \Phi(t)} \leq \frac{C_0}{\sqrt{n v_{*}}} \frac{1 + a^{*}}{2 a_{*}}$ for all $t \in \mathbb{R}$. 
    Choosing $t = \frac{z - \mu_z}{\sigma_z}$ for any $z \in \mathbb{R}$ we have that
    $$ \abs{\mathbb{P}(S_n \leq \frac{z - \mu_z}{\sigma_z} ) - \Phi(\frac{z - \mu_z}{\sigma_z})} = \abs{\mathbb{P}(Z \leq z ) - \Phi(\frac{z - \mu_z}{\sigma_z})}$$
    since
    \begin{align}
        S_n \leq  \frac{z - \mu_z}{\sigma_z} &\Longleftrightarrow \sum_{i = 1}^n \frac{a_i}{\sigma_z}(X_i - p_i) \leq  \frac{z - \mu_z}{\sigma_z} \\
        &\Longleftrightarrow \sum_{i = 1}^n \frac{a_i}{\sigma_z} X_i - \sum_{i = 1}^n \frac{a_i}{\sigma_z} p_i \leq  \frac{z - \mu_z}{\sigma_z} \\
        &\Longleftrightarrow \frac{Z - \mu_z}{\sigma_z} \leq  \frac{z - \mu_z}{\sigma_z} \Longleftrightarrow Z \leq z
    \end{align}
    and the result follows.
\end{proof}

\subsection{Ratio of Correlated Gaussians}

\subsubsection{Proof of Theorem~\ref{thm:ratio_gauss}}

We require the following ancillary result for proving the convergence bound of a ratio of correlated normal random variables.

\begin{lemma}\label{lem:taylor_frac}
    Consider a real number $a \in \mathbb{R}$ and function $s: \mathbb{R} \to \mathbb{R}$ with Taylor series $s(t) = s_0 + s_1 (t - t_0) + s_2 (t - t_0)^2 + O((t - t_0)^3)$ such that $s_0 > 0$.
    We can expand
    \[
    \frac{a}{s(t)} = \frac{a}{s_0} - \frac{a s_1}{s_0^2} (t - t_0) + a \left( \frac{s_1^2}{s_0^3} - \frac{s_2}{s_0^2} \right) (t - t_0)^2 + O((t - t_0)^3).
    \]
\end{lemma}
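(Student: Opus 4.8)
The plan is to reduce the statement to the standard geometric-series expansion of $1/(1+x)$ after factoring out the leading coefficient $s_0$. First I would introduce $u = t - t_0$, so that $s(t) = s_0 + s_1 u + s_2 u^2 + O(u^3)$, and rewrite
\[
\frac{a}{s(t)} = \frac{a}{s_0}\cdot\frac{1}{1+x}, \qquad x := \frac{s_1}{s_0}\,u + \frac{s_2}{s_0}\,u^2 + O(u^3).
\]
The hypothesis $s_0 > 0$ ensures that $1/s_0$ is well defined and that $x \to 0$ as $u \to 0$, so both this manipulation and the subsequent expansion are valid in a neighbourhood of $t_0$.

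Next I would apply the identity $\frac{1}{1+x} = 1 - x + x^2 + O(x^3)$, valid for $\abs{x} < 1$. Since $x = O(u)$, the remainder satisfies $O(x^3) = O(u^3)$, and only finitely many terms survive to second order: the linear and quadratic parts of $x$ together with the quadratic part of $x^2 = (s_1/s_0)^2 u^2 + O(u^3)$. Collecting powers of $u$ gives
\[
\frac{1}{1+x} = 1 - \frac{s_1}{s_0}\,u + \qty(\frac{s_1^2}{s_0^2} - \frac{s_2}{s_0})u^2 + O(u^3),
\]
and multiplying through by $a/s_0$ and restoring $u = t - t_0$ produces exactly the claimed expansion.

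The only step requiring genuine care is the $O$-notation bookkeeping: I must confirm that the $s_2 u^2$ contribution from $x$, the square of the leading $s_1 u / s_0$ term from $x^2$, and nothing else appear at order $u^2$, while every remaining contribution (the cross term in $x^2$ and all of $x^3$) is genuinely $O(u^3)$. This is routine but is the one place a sign or coefficient slip could occur. I anticipate no substantive obstacle, as the result is simply a second-order Taylor computation; its purpose in the sequel is to linearise $1/W$ when the ratio $Z/W$ is analysed in the proof of Theorem~\ref{thm:ratio_gauss}.
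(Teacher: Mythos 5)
Your proposal is correct and follows essentially the same route as the paper's proof: factor out $s_0$, expand $1/(1+\epsilon)$ as a geometric/Taylor series in the small quantity $\epsilon = \frac{s_1}{s_0}(t-t_0) + \frac{s_2}{s_0}(t-t_0)^2 + O((t-t_0)^3)$, collect powers of $(t-t_0)$, and multiply by $a$. The only differences are cosmetic (your substitution $u = t - t_0$ and explicit remark on $\abs{x}<1$), so nothing further is needed.
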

\begin{proof}
Given the Taylor series expansion of \( s(t) \):
\[
s(t) = s_0 + s_1 (t - t_0) + s_2 (t - t_0)^2 + O((t - t_0)^3),
\]
we want to find the Taylor series expansion of \( \frac{a}{s(t)} \).

First, we rewrite \( s(t) \) in a more convenient form:
\[
s(t) = s_0 \left( 1 + \frac{s_1}{s_0} (t - t_0) + \frac{s_2}{s_0} (t - t_0)^2 + O((t - t_0)^3) \right).
\]

Let \( \epsilon = \frac{s_1}{s_0} (t - t_0) + \frac{s_2}{s_0} (t - t_0)^2 + O((t - t_0)^3)\). Then:
\[
s(t) = s_0 (1 + \epsilon).
\]

The reciprocal of \( s(t) \) is:
\[
\frac{1}{s(t)} = \frac{1}{s_0 (1 + \epsilon)} = \frac{1}{s_0} \cdot \frac{1}{1 + \epsilon}.
\]

Using the Taylor series expansion for \( \frac{1}{1 + \epsilon} \) around \( \epsilon = 0 \):
\[
\frac{1}{1 + \epsilon} = 1 - \epsilon + \epsilon^2 - \epsilon^3 + O(\epsilon^4).
\]

Substituting back \( \epsilon = \frac{s_1}{s_0} (t - t_0) + \frac{s_2}{s_0} (t - t_0)^2 + O((t - t_0)^3)\), we get:
\begin{multline*}
\frac{1}{s(t)} = \frac{1}{s_0} \bigg( 1 - \left( \frac{s_1}{s_0} (t - t_0) + \frac{s_2}{s_0} (t - t_0)^2 + O((t - t_0)^3)  \right) \\ + \left( \frac{s_1}{s_0} (t - t_0) + \frac{s_2}{s_0} (t - t_0)^2 + O((t - t_0)^3) \right)^2 + O((t - t_0)^3) \bigg).
\end{multline*}

Expanding the square term and collecting like powers of \( (t - t_0) \), we get:
\[
\frac{1}{s(t)} = \frac{1}{s_0} \left( 1 - \frac{s_1}{s_0} (t - t_0) - \frac{s_2}{s_0} (t - t_0)^2 + \left( \frac{s_1}{s_0} (t - t_0) \right)^2 + O((t - t_0)^3) \right).
\]

Simplifying further:
\[
\frac{1}{s(t)} = \frac{1}{s_0} - \frac{s_1}{s_0^2} (t - t_0) + \left( \frac{s_1^2}{s_0^3} - \frac{s_2}{s_0^2} \right) (t - t_0)^2 + O((t - t_0)^3).
\]

Finally, multiplying by \( a \), we get the Taylor series expansion for \( \frac{a}{s(t)} \):
\[
\frac{a}{s(t)} = \frac{a}{s_0} - \frac{a s_1}{s_0^2} (t - t_0) + a \left( \frac{s_1^2}{s_0^3} - \frac{s_2}{s_0^2} \right) (t - t_0)^2 + O((t - t_0)^3).
\]

So, the Taylor series expansion of \( \frac{a}{s(t)} \) around \( t = t_0 \) is:
\[
\frac{a}{s(t)} = \frac{a}{s_0} - \frac{a s_1}{s_0^2} (t - t_0) + a \left( \frac{s_1^2}{s_0^3} - \frac{s_2}{s_0^2} \right) (t - t_0)^2 + O((t - t_0)^3).
\]
\end{proof}

\ratiogauss*

\begin{proof}
We proceed via the standard Geary-Hinkley transformation~\citep{Hayya1975ANO}, then bound the Kolmogorov-Smirnov distance from a Normal distribution using an upper bound on the standard Normal CDF and Local-Lipschitz type argument.

Using the axioms of probability we have
$$G(t) := \mathbb{P}(Z/W \leq t) = \mathbb{P}(Z - tW < 0, W > 0) + \mathbb{P}(Z - tW \geq 0, W \leq 0)$$
$$ = \mathbb{P}(Z - tW < 0 \vert W > 0) \mathbb{P}(W > 0) + \mathbb{P}(Z - tW \geq 0 \vert W \leq 0) \mathbb{P}(W \leq 0) $$
$$ \to \mathbb{P}(Z - tW < 0 \vert W > 0) \mathbb{P}(W > 0) \to \mathbb{P}(Z - tW < 0)$$
where we invoke the condition $\mathbb{P}(W > 0) \to 1$. 

The random variable $D := Z - tW$ is itself a normal distribution with parameters $\mu_d = \mu_z - t \mu_w$, $\sigma_d^2 = \sigma_z^2 - 2 t c + t^2 \sigma_w^2$, where $c := \rho \sigma_z \sigma_w $.
Thus we have 
$$ \mathbb{P}(Z - tW < 0) = \Phi(\frac{-\mu_d}{\sigma_d}) = \Phi\left( \frac{t - \frac{\mu_z}{\mu_w}}{ \mu_w^{-1} \sqrt{\sigma_z^2 - 2 t c + t^2 \sigma_w^2}} \right) = \Phi\left( \frac{t - \mu}{ s(t) } \right),$$
where we have defined $s(t) := \mu_w^{-1} \sqrt{\sigma_z^2 - 2 t c + t^2 \sigma_w^2}$.


We evaluate 
 
$$\abs{ \mathbb{P}(Z/W \leq t) - \Phi\qty(\frac{t - \mu}{\sigma}) } = \left\vert \Phi\left( \frac{t - \mu}{ s(t) } \right) - \Phi\qty(\frac{t - \mu}{\sigma}) \right\vert$$

$$= \left\vert \frac{1}{\sqrt{2 \pi}} \int_{-\infty}^{(t - \mu) / \sigma}\exp(-\frac{z^2}{2})\,\mathrm{d}z - \frac{1}{\sqrt{2 \pi}} \int_{-\infty}^{(t - \mu) / s(t)}\exp(-\frac{z^2}{2})\,\mathrm{d}z \right\vert$$

$$ = \left\vert \frac{1}{\sqrt{2 \pi}} \int_{(t - \mu) / s(t)}^{(t - \mu) / \sigma}\exp(-\frac{z^2}{2})\,\mathrm{d}z \right\vert = \left\vert \frac{1}{\sqrt{2 \pi}} \int_{(t - \mu) / s(t)}^{(t - \mu) / \sigma} \frac{1}{z} \cdot z\exp(-\frac{z^2}{2})\,\mathrm{d}z \right\vert $$

$$=\left\vert \frac{1}{\sqrt{2 \pi}} \left(\frac{-1}{z} \cdot \exp(-\frac{z^2}{2}) \right\rvert_{(t - \mu) / s(t)}^{(t - \mu) / \sigma} + \frac{1}{\sqrt{2 \pi}} \int_{(t - \mu) / s(t)}^{(t - \mu) / \sigma} \frac{1}{z} \exp(-\frac{z^2}{2})\,\mathrm{d}z \right\vert \leq \left\vert \frac{1}{\sqrt{2 \pi}} \left(\frac{-1}{z} \cdot \exp(-\frac{z^2}{2}) \right\rvert_{(t - \mu) / s(t)}^{(t - \mu) / \sigma}  \right\vert$$
where we use integration-by-parts.
Evaluating at the limits gives
\begin{align}\label{eq:int_ub}
 \abs{ \mathbb{P}(Z/W \leq t) - \Phi\qty(\frac{t - \mu}{\sigma}) } &= \frac{1}{\sqrt{2 \pi}} \left\vert  \frac{s(t)}{t- \mu}\exp(-\frac{(t - \mu)^2}{2 s(t)^2}) - \frac{\sigma}{t- \mu}\exp(-\frac{(t - \mu)^2}{2 \sigma^2}) \right\vert \notag \\
 &\leq \frac{1}{\sqrt{2 \pi}} \frac{1}{\vert t- \mu \vert} \left\vert  s(t)\exp(-\frac{(t - \mu)^2}{2 s(t)^2}) - \sigma\exp(-\frac{(t - \mu)^2}{2 \sigma^2}) \right\vert \notag \\
 &= \frac{1}{\sqrt{2 \pi}} \frac{1}{\vert t- \mu \vert} \left\vert f(t) \right\vert
\end{align}
where we have defined $f(t) :=  s(t)\exp(-(t - \mu)^2/2 s(t)^2) - \sigma\exp(-(t - \mu)^2/2 \sigma^2) $.
We will consider the Taylor expansion of $s(t)$ and substitute into the definition of $f(t)$ to establish our bounds. 
Consider now the Taylor expansion of $s(t)$ around $t = \mu$, whence
$$s(t) = \sigma + s_1(t - \mu) + s_2(t - \mu)^2 + \cdots $$
Using Lemma~\ref{lem:taylor_frac}, we expand $(t - \mu) / s(t)$ as
\begin{align}\label{eq:t_minus_mu_over_s}
\frac{t - \mu}{s(t)} &= \frac{t - \mu}{\sigma} - \frac{(t - \mu) s_1}{\sigma^2}(t - \mu) + (t - \mu) \qty( \frac{s_1^2}{\sigma^3} - \frac{s_2}{\sigma^2} )(t - \mu)^2 + O((t - \mu)^3) \notag\\
&= \frac{t - \mu}{\sigma} - \frac{s_1}{\sigma^2}(t - \mu)^2 + O((t - \mu)^3).
\end{align}
Recall the power series 
\begin{equation}\label{eq:gauss_power_series}
    \exp(- \frac{z^2}{2}) = \sum_{k=0}^{\infty} \frac{(-1)^{k} z^{2k}}{2^k k!} = 1 - \frac{z^2}{2} + \frac{z^4}{8} - \cdots
\end{equation}
which converges for all $z \in (-\infty, \infty)$.
We substitute~\eqref{eq:t_minus_mu_over_s} into~\eqref{eq:gauss_power_series} yielding
\begin{align}
    \exp( - \frac{(t - \mu)^2}{2 s(t)^2} ) = 1 - \frac{1}{2} \qty[ \frac{t - \mu}{\sigma} - \frac{s_1}{\sigma^2}(t - \mu)^2 + O((t - \mu)^3) ] + O((t - \mu)^4). 
    \end{align}
Multiplying by the Taylor expansion for $s(t)$ yields
\begin{align}\label{eq:f_exp_s_term}
        s(t) \exp( - \frac{(t - \mu)^2}{2 s(t)^2} )
    &=
    \qty(\sigma + s_1(t - \mu) + s_2(t - \mu)^2 + O((t - \mu)^3) )  \\ 
    &\cdot \qty(1 - \frac{1}{2} \qty[ \frac{t - \mu}{\sigma} - \frac{s_1}{\sigma^2}(t - \mu)^2 + O((t - \mu)^3) ] + O((t - \mu)^4) ) \\
    &=
    \sigma \qty[ 1 - \frac{1}{2}\qty(\frac{t - \mu}{\sigma})^2 ] + s_1 (t - \mu) + s_2(t - \mu)^2 + O((t - \mu)^3)
\end{align}
Similarly, we have using~\eqref{eq:gauss_power_series} that
\begin{equation}\label{eq:f_exp_sigma_term}
    \sigma \exp( \frac{-(t - \mu)^2}{2 \sigma^2} ) = \sigma \qty[ 1 - \frac{1}{2}\qty(\frac{t - \mu}{\sigma})^2 ] + O((t - \mu)^4).
\end{equation}
Subtracting~\eqref{eq:f_exp_sigma_term}~from~\eqref{eq:f_exp_s_term} yields
\begin{equation}\label{eq:f_exp}
    f(t) = s_1(t - \mu) + s_2(t - \mu)^2 + O((t - \mu)^3).
\end{equation}
The function $s(t)$ is the square root of a quadratic function, and therefore is very close to a linear function.
As such we expect the quadratic terms of $s(t)$ to dominate, that is, we expect in the neighbourhood of $t = \mu$
$$\abs{s_0 + s_1(t - t_0) + s_2(t - t_0)^2} \geq \abs{\sum_{k=3}^\infty s_k (t - t_0)^k }.$$
Moreover, as $f(t)$ is a smooth function of $s(t)$ and $t$ we expect this property to hold for the expansion of $f(t)$ in~\eqref{eq:f_exp}, that is, in the neighbourhood of $t = \mu$
\begin{equation}\label{eq:f_exp_dominate}
    \abs{f_0 + f_1(t - \mu) + f_2(t - \mu)^2} \geq \abs{\sum_{k=3}^\infty f_k (t - \mu)^k },
\end{equation}
where $f(t) = \sum_{k=0}^\infty f_k (t - \mu)^k$ is the Taylor expansion of $f(t)$ around $t = \mu$.
Indeed, one can verify that $f_0 = f(\mu) = 0$ and $f_1 = s_1$, $f_2 = s_2$.

Thus we have
\begin{align}
    \abs{f(t)} &= \abs{s_1(t - \mu) + s_2(t - \mu)^2 + \sum_{k=3}^\infty f_k (t - \mu)^k} \\
    &\leq \abs{s_1(t - \mu) + s_2(t - \mu)^2} + \abs{\sum_{k=3}^\infty f_k (t - \mu)^k} \tag{triangle inequality} \\
    &\leq 2\abs{s_1(t - \mu) + s_2(t - \mu)^2} \tag{Using~\eqref{eq:f_exp_dominate}}
\end{align}




It can be shown by direct differentiation that 
$$s_1 = \frac{\sqrt{{\mu_z}^{2} {\sigma_w}^{2} + {\mu_w}^{2} {\sigma_z}^{2} - 2 \, c {\mu_w} {\mu_z}} {\left({\mu_z} {\sigma_w}^{2} - c {\mu_w}\right)}}{{\mu_w} {\mu_z}^{2} {\sigma_w}^{2} + {\mu_w}^{3} {\sigma_z}^{2} - 2 \, c {\mu_w}^{2} {\mu_z}} = \frac{{\mu_z} {\sigma_w}^{2} - c {\mu_w}}{\sqrt{{\mu_z}^{2} {\sigma_w}^{2} + {\mu_w}^{2} {\sigma_z}^{2} - 2 \, c {\mu_w} {\mu_z}} {\mu_w}}
 =  \frac{\mu_z \sigma_w^2 - c \mu_w}{ \mu_w^3 \sigma}$$
\begin{multline*}
  s_2 = \frac{{\left({\mu_w}^{2} {\sigma_w}^{2} {\sigma_z}^{2} - c^{2} {\mu_w}^{2}\right)} \sqrt{{\mu_z}^{2} {\sigma_w}^{2} + {\mu_w}^{2} {\sigma_z}^{2} - 2 \, c {\mu_w} {\mu_z}}}{2 \, {\left({\mu_z}^{4} {\sigma_w}^{4} + {\mu_w}^{4} {\sigma_z}^{4} - 4 \, c {\mu_w} {\mu_z}^{3} {\sigma_w}^{2} + 4 \, c^{2} {\mu_w}^{2} {\mu_z}^{2} + 2 \, {\left({\mu_w}^{2} {\mu_z}^{2} {\sigma_w}^{2} - 2 \, c {\mu_w}^{3} {\mu_z}\right)} {\sigma_z}^{2}\right)}} \\ = \frac{{\left({\sigma_w} {\sigma_z} + c\right)} {\left({\sigma_w} {\sigma_z} - c\right)} {\mu_w}^{2}}{2 \, {\left({\mu_z}^{2} {\sigma_w}^{2} + {\mu_w}^{2} {\sigma_z}^{2} - 2 \, c {\mu_w} {\mu_z}\right)}^{\frac{3}{2}}} 
= \frac{\sigma_w^2  \sigma_z^2 (1 - \rho^2)}{2 \mu_w^4 \sigma^3}  
\end{multline*}

Thus we have
\begin{align}\label{eq:f_ub_1}
    \abs{f} &\leq 2 \abs{ \frac{\mu_z \sigma_w^2 - c \mu_w}{ \mu_w^3 \sigma} (t - \mu) + \frac{\sigma_w^2  \sigma_z^2 (1 - \rho^2)}{2 \mu_w^4 \sigma^3} (t - \mu)^2 } \\
    &\leq 2 \abs{ \frac{\mu_z \sigma_w^2 - c \mu_w}{ \mu_w^3 \sigma} (t - \mu)} + 2 \abs{\frac{\sigma_w^2  \sigma_z^2 (1 - \rho^2)}{2 \mu_w^4 \sigma^3} (t - \mu)^2 } \tag{triangle inequality} \\
    &=
    2 \abs{ \frac{\mu_z \sigma_w^2 - c \mu_w}{ \mu_w^3 \sigma} (t - \mu)} + 2 \abs{\frac{\sigma_w^2  \sigma_z^2 (1 - \rho^2)}{\mu_w^3 \sigma} (t - \mu) } \cdot \abs{\frac{t - \mu}{2 \mu_w \sigma^2}}
\end{align}

Consider the interval $I = [ \mu - 2\abs{\mu_w} \sigma^2 , \mu + 2\abs{\mu_w} \sigma^2 ]$. 
Observe that for $t \in I$,  $\abs{\frac{t - \mu}{2 \mu_w \sigma^2}} \leq 1$.
For $t \notin I$, we have from Assumption 2 in the theorem statement that $f(t)$ is negligible, because
\begin{align*}
    f(t) \to 0 & \Longleftarrow (2 \abs{\mu_w} \sigma^2 \gg \sigma) \wedge (t \notin I) \tag{$\exp(\,\cdot\,)$ terms in $f(t)$ vanish} \\
    &  \Longleftrightarrow ( 2 \abs{\mu_w} \sigma \gg 1 ) \wedge (t \notin I)\\
    &  \Longleftrightarrow ( 2 \abs{\mu_w}^{-1} \sqrt{\mu_z^2 \sigma_w^2 + \mu_w^2 \sigma_z^2 - 2 \rho \sigma_z \sigma_w \mu_z \mu_w} \gg 1 ) \wedge (t \notin I)\tag{definition of $\sigma$}\\
    &  \Longleftarrow ( 2 \abs{\mu_w}^{-1} \sqrt{\mu_z^2 \sigma_w^2 + \mu_w^2 \sigma_z^2 - 2 \sigma_z \sigma_w \mu_z \mu_w} \gg 1 ) \wedge (t \notin I)\tag{$\rho \in [-1, 1]$} \\
    &  \Longleftrightarrow ( 2 \abs{\mu_w}^{-1} \sqrt{(\mu_z \sigma_w - \mu_w \sigma_z)^2} \gg 1 ) \wedge (t \notin I)\tag{factoring inside $\sqrt{\, \cdot\, }$} \\
    &  \Longleftrightarrow ( 2 \abs{\mu_w}^{-1} \abs{\mu_z \sigma_w - \mu_w \sigma_z} \gg 1 ) \wedge (t \notin I) \\
    &  \Longleftrightarrow ( 2 \abs{\mu_z} \abs{\frac{\sigma_w}{\mu_z} - \frac{\sigma_z}{\mu_z} } \gg 1 ) \wedge (t \notin I), \tag{Rearranging terms} \\
\end{align*}
which is precisely Assumption 2.
We substitute the statement $\abs{\frac{t - \mu}{2 \mu_w \sigma^2}} \leq 1$ into~\eqref{eq:f_ub_1} yielding
\begin{align}\label{eq:f_ub_2}
    \abs{f} &\leq \abs{ \frac{\mu_z \sigma_w^2 - c \mu_w}{ \mu_w^3 \sigma} (t - \mu)} + \abs{\frac{\sigma_w^2  \sigma_z^2 (1 - \rho^2)}{\mu_w^3 \sigma}} \notag \\
    \abs{f} &\leq \abs{ \frac{\mu_z \sigma_w^2 - c \mu_w}{ \mu_w^3 \sigma} } \abs{t - \mu} + \abs{\frac{\sigma_w^2  \sigma_z^2 (1 - \rho^2)}{\mu_w^3 \sigma}} \abs{t - \mu}
\end{align}
Substituting~\eqref{eq:f_ub_2}~into~\eqref{eq:int_ub} yields
\begin{align}
    \abs{ \mathbb{P}(Z/W \leq t) - \Phi\qty(\frac{t - \mu}{\sigma}) } & \leq \frac{2}{\sqrt{2 \pi}} \qty( \abs{ \frac{\mu_z \sigma_w^2 - c \mu_w}{ \mu_w^3 \sigma} } + \abs{\frac{\sigma_w^2  \sigma_z^2 (1 - \rho^2)}{\mu_w^3 \sigma}}) \\
    &\leq \sqrt{\frac{2}{\pi}} \frac{1}{\sigma \abs{\mu_w}^3} \qty( \abs{ \mu_z \sigma_w^2 - c \mu_w} + \sigma_w^2  \sigma_z^2 (1 - \rho^2) ) \tag{$\rho \in (-1, 1)$} \\
    &\leq \sqrt{\frac{2}{\pi}} \frac{1}{\sigma \abs{\mu_w}^3} \qty( \abs{ \mu_z \sigma_w^2 - \rho \mu_w \sigma_w \sigma_z} + \sigma_w^2  \sigma_z^2 (1 - \rho^2) ) \tag{$c = \rho \sigma_w \sigma_z$} \\
    &\leq \sqrt{\frac{2}{\pi}} \frac{1}{\sigma \abs{\mu_w}^3} \qty( \abs{ \mu_z \sigma_w^2} + \abs{\rho \mu_w \sigma_w \sigma_z} + \sigma_w^2  \sigma_z^2 (1 - \rho^2) ) \tag{triangle inequality} \\
    &= \sqrt{\frac{2}{\pi}} \frac{1}{\sigma \abs{\mu_w}^3} \qty( \abs{ \mu_z \sigma_w^2} -\sigma_w^2 \sigma_z^2 (\abs{\rho} - \frac{\mu_w}{2 \sigma_w \sigma_z})^2 + \frac{\mu_w^2}{4} + \sigma_w^2 \sigma_z^2 ) \tag{completing the square} \\
    &\leq \sqrt{\frac{2}{\pi}} \frac{1}{\sigma \abs{\mu_w}^3} \qty( \abs{ \mu_z \sigma_w^2} + \frac{\mu_w^2}{4} + \sigma_w^2 \sigma_z^2 ) \\
    &\leq \sqrt{\frac{2}{\pi}} \frac{1}{\sigma \abs{\mu_w}^3} \qty( \abs{ \mu_z \sigma_w^2} + \mu_w^2 + \sigma_w^2 \sigma_z^2 )
\end{align}
and the result follows since $t$ is arbitrary and $\mu_w > 0$ by assumption.
\end{proof}

\begin{remark}\label{rem:bounds_interp}
There is a nice interpretation of each of the terms in the upper bound of Theorem~\ref{thm:ratio_gauss}.
\begin{itemize}
    \item $\frac{1}{\mu_w^3}$ : the further the mean of the denominator from zero the better the approximation. As $\mu_w$ grows there is less and less support to the distribution of the ratio below zero.
    \item $\frac{1}{\sigma}$ : the narrower the ratio distribution, the worse the approximation. We can think of broader distributions being more ``forgiving''.
    \item  $\abs{\mu_z \sigma_w^2}$: the closer the mean of the numerator to zero, the better the approximation.
    \item $\mu_w^2$ : weakens the $\frac{1}{\mu_w^3}$ term to $\frac{1}{\mu_w}$.
    \item $\sigma_w^2 \sigma_z^2$ : penalty term for larger variances.
\end{itemize}

\end{remark}

\subsection{Ratios of Correlated Sums of Bernoullis}

\subsubsection{Proof of Theorem~\ref{thm:ratio_sum_bernoulli}}

\ratiosumbernoulli*

\begin{proof}
We wish to determine the ratio distribution $\widetilde{Z} / \widetilde{W}$.
From Lemma~\ref{lem:bernoulli_sum} we have that $\widetilde{Z} \approx Z$ and $\widetilde{W} \approx W$ are approximately Gaussian.
Moreover, for fixed $t\in \mathbb{R}$ we have that $\widetilde{Z} - t \widetilde{W}$ is close to the Gaussian distributed $Y = Z - t W$.
Indeed we have 
\begin{align}
    \abs{\mathbb{P}(\widetilde{Z} / \widetilde{W} \leq t) - \Phi(\frac{t - \mu}{\sigma})}
    &=
    \abs{\mathbb{P}(\widetilde{Z} - t \widetilde{W} \leq 0) - \Phi(\frac{t - \mu}{\sigma})} \tag{re-arranging terms in first $\mathbb{P}(\,\cdot\,)$}\\
    &= \abs{\mathbb{P}(\widetilde{Z} - t \widetilde{W} \leq 0) - \mathbb{P}(Z - t W \leq 0) +  \mathbb{P}(Z - t W \leq 0) - \Phi(\frac{t - \mu}{\sigma})} \\
    &\leq \abs{\mathbb{P}(\widetilde{Z} - t \widetilde{W} \leq 0) - \mathbb{P}(Z - t W \leq 0)} + \abs{\mathbb{P}(Z - t W \leq 0) - \Phi(\frac{t - \mu}{\sigma})} \\
    &\leq \abs{\mathbb{P}(\widetilde{Z} - t \widetilde{W} \leq 0) - \mathbb{P}(Z - t W \leq 0)} + \abs{\mathbb{P}(Z / W \leq t) - \Phi(\frac{t - \mu}{\sigma})}\label{eq:gauss_overall_1}
\end{align}

For the first term of~\eqref{eq:gauss_overall_1}, we have

\begin{align}
    &\abs{\mathbb{P}(\widetilde{Z} - t \widetilde{W} \leq 0) - \mathbb{P}(Z - t W \leq 0)} \\
    &= \abs{\mathbb{P}(\widetilde{Z} - t \widetilde{W} \leq 0) - \mathbb{P}(Z - t \widetilde{W} \leq 0) + \mathbb{P}(Z - t \widetilde{W} \leq 0) - \mathbb{P}(Z - t W \leq 0)} \\
    &\leq \abs{\mathbb{P}(\widetilde{Z} - t \widetilde{W} \leq 0) - \mathbb{P}(Z - t \widetilde{W} \leq 0)} + \abs{\mathbb{P}(Z - t \widetilde{W} \leq 0) - \mathbb{P}(Z - t W \leq 0)} \tag{triangle inequality}\\
    &= \abs{\mathbb{P}(\widetilde{Z} \leq t \widetilde{W}) - \mathbb{P}(Z \leq t \widetilde{W})} + \abs{\mathbb{P}(W \geq Z/t ) - \mathbb{P}(\widetilde{W} \geq Z/t )} \tag{rearranging inequalities in $\mathbb{P}(\,\cdot\,)$}\\
    &=\abs{\mathbb{P}(\widetilde{Z} \leq t \widetilde{W}) - \mathbb{P}(Z \leq t \widetilde{W})} + \abs{1 - \mathbb{P}(W \leq Z/t ) - (1 - \mathbb{P}(\widetilde{W} \leq Z/t ))} \\
    &=\abs{\mathbb{P}(Z \leq t \widetilde{W}) - \mathbb{P}(\widetilde{Z} \leq t \widetilde{W})} + \abs{\mathbb{P}(W < Z/t )  - \mathbb{P}(\widetilde{W} < Z/t )}\label{eq:gauss_overall_2_1}\\
    &=\abs{\mathbb{P}(Z \leq t \widetilde{W}) - \mathbb{P}(\widetilde{Z} \leq t \widetilde{W})} + \abs{\mathbb{P}(W \leq Z/t )  - \mathbb{P}(\widetilde{W} \leq Z/t )},\label{eq:gauss_overall_2_2}
\end{align}
where~\eqref{eq:gauss_overall_2_2} follows~\eqref{eq:gauss_overall_2_1} from continuity of $W$ and $\widetilde{W}$.
Lemma~\ref{lem:bernoulli_sum} provides bounds on $\abs{\mathbb{P}(Z \leq z) - \mathbb{P}(\widetilde{Z} \leq z)}$ and $\abs{\mathbb{P}(W \leq w) - \mathbb{P}(\widetilde{W} \leq w)}$ for arbitrary $z, w \in \mathbb{R}$, giving
\begin{align}\label{eq:gaussian_overall_3}
    \abs{\mathbb{P}(\widetilde{Z} - t \widetilde{W} \leq 0) - \mathbb{P}(Z - t W \leq 0)} &\leq  \frac{C_0}{\sqrt{n_a v_{*}}} \frac{1 + a^{*}}{2 a_{*}} + \frac{C_0}{\sqrt{n v_{*}}} \frac{1 + b^{*}}{2 b_{*}} \\
    &\leq \frac{C_0 }{\sqrt{n_a v_{*}}} \qty( \frac{1 + a^{*}}{2 a_{*}} + \frac{1 + b^{*}}{2 b_{*}}) 
    \leq \frac{C_0 }{\sqrt{n_a v_{*}}} \frac{1 + b^{*}}{a_{*}}
\end{align}

For the second term of~\eqref{eq:gauss_overall_1} we invoke Theorem~\ref{thm:ratio_gauss}, which yields
\begin{align}\label{eq:gaussian_overall_4}
\abs{\mathbb{P}(Z / W \leq t) - \Phi(\frac{t - \mu}{\sigma})} \leq \sqrt{\frac{2}{\pi}} \cdot \frac{\sigma_w^2 (\abs{\mu_z} + \sigma_z^2) + \mu_w^2 }{\sigma \mu_w^3},
\end{align}
and from Lemma~\ref{lem:bernoulli_sum} we have that
\begin{align}
\mu_z = \alpha + \sum_{i = 1}^n a_i p_i, \quad  \sigma_z^2 = \sum_{i = 1}^n a^2_i p_i(1 - p_i), \quad
\mu_w = \beta + \sum_{i = 1}^n b_i p_i, \quad  \sigma_w^2 = \sum_{i = 1}^n b^2_i p_i(1 - p_i). \end{align}
The quantity $\operatorname{Cov}[\widetilde{Z}, \widetilde{W}]$ is readily computed as $\sum_{i = 1}^n a_i b_i p_i(1 - p_i)$, from which we compute $\rho$ using the standard formula $\rho_{X, Y} = \operatorname{Cov}[X, Y] / (\sigma_X \sigma_Y)$.
The mean of the approximate distribution $\mu = \mu_z / \mu_w$ as given in Theorem~\ref{thm:ratio_gauss}.

Substituting~\eqref{eq:gaussian_overall_3}~and~\eqref{eq:gaussian_overall_4} into~\eqref{eq:gauss_overall_1} gives
\begin{equation}
\abs{\mathbb{P}(\widetilde{Z} / \widetilde{W} \leq t) - \Phi(\frac{t - \mu}{\sigma})} \leq \frac{C_0 }{\sqrt{n_a v_{*}}} \frac{1 + b^{*}}{a_{*}}  + \sqrt{\frac{2}{\pi}} \cdot \frac{\sigma_w^2 (\abs{\mu_z} + \sigma_z^2) + \mu_w^2 }{\sigma \mu_w^3}
\end{equation}
and the result follows since $t$ is arbitrary.
\end{proof}

\subsubsection{Theorem~\ref{thm:ratio_sum_bernoulli} Asymptotics}

We require the following lemma.

\begin{lemma}\label{lem:s_t_min}
Let $s(t) := \mu_w^{-1} \sqrt{\sigma_z^2 - 2 t c + t^2 \sigma_w^2}$, where $\mu_w, \sigma_w, \sigma_z > 0$ and $c \in [-\sigma_w \sigma_z, +\sigma_w, \sigma_z]$.
Then, $s(t) \geq s(c / \sigma_w^2) = \mu_w^{-1} \sqrt{\sigma_z^2 - c^2 / \sigma_w^2}$ for all $t \in \mathbb{R}$, that is, $c / \sigma_w^2$ minimises $s(t)$.
\end{lemma}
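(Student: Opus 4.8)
The plan is to reduce the problem to minimising the radicand, which is a simple univariate quadratic. Since $\mu_w^{-1} > 0$ and the square-root function is strictly increasing on $[0, \infty)$, the minimiser of $s(t)$ coincides with the minimiser of the function under the root, namely $q(t) := \sigma_z^2 - 2 c t + \sigma_w^2 t^2$, provided $q(t) \geq 0$ throughout so that $s$ is real-valued.

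First I would complete the square, writing $q(t) = \sigma_w^2 \left(t - c/\sigma_w^2\right)^2 + \left(\sigma_z^2 - c^2/\sigma_w^2\right)$. Because the leading coefficient $\sigma_w^2$ is strictly positive, $q$ is an upward-opening parabola with a unique global minimum attained at the vertex $t^{\ast} = c/\sigma_w^2$, where $q(t^{\ast}) = \sigma_z^2 - c^2/\sigma_w^2$. Equivalently, one may set $q'(t) = 2\sigma_w^2 t - 2c = 0$ and solve for the stationary point, which is a minimiser since $q''(t) = 2\sigma_w^2 > 0$.

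Next I would verify the nonnegativity needed for $s$ to be well defined on all of $\mathbb{R}$. The hypothesis $c \in [-\sigma_w \sigma_z,\ \sigma_w \sigma_z]$ gives $c^2 \leq \sigma_w^2 \sigma_z^2$, hence $\sigma_z^2 - c^2/\sigma_w^2 \geq 0$; since this value is the minimum of $q$, we obtain $q(t) \geq 0$ for every $t$, so $s(t)$ is real throughout. Applying monotonicity of the square root and positivity of $\mu_w^{-1}$ then yields $s(t) \geq \mu_w^{-1}\sqrt{q(t^{\ast})} = \mu_w^{-1}\sqrt{\sigma_z^2 - c^2/\sigma_w^2} = s(c/\sigma_w^2)$ for all $t \in \mathbb{R}$, which is exactly the claim.

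I do not expect any serious obstacle here, as the argument is elementary calculus on a quadratic. The only point requiring genuine care is confirming that the radicand never becomes negative, which is precisely where the correlation-type bound $\abs{c} \leq \sigma_w \sigma_z$ enters; this same inequality simultaneously guarantees both the reality of $s$ on $\mathbb{R}$ and the nonnegativity of the minimum value $\sigma_z^2 - c^2/\sigma_w^2$ appearing in the statement.
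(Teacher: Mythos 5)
Your proof is correct, but it takes a different route from the paper's. The paper differentiates $s(t)$ directly: it computes $\partial s/\partial t = (\sigma_w^2 t - c)/\bigl(\mu_w(\sigma_z^2 - 2tc + t^2\sigma_w^2)^{1/2}\bigr)$, finds the unique stationary point $t = c/\sigma_w^2$, and then evaluates the second derivative there to confirm it is positive. You instead strip away the outer square root and the factor $\mu_w^{-1}$ by monotonicity, and minimise the quadratic radicand $q(t)$ by completing the square. Your route buys three things the paper's argument either omits or only gets implicitly: (i) it is purely algebraic, with no differentiation of a composite function; (ii) it gives \emph{global} minimality immediately (an upward parabola's vertex is its global minimum), whereas the paper's second-derivative test is, strictly speaking, a local statement whose global conclusion relies on the stationary point being unique; and (iii) it explicitly verifies that the radicand is nonnegative on all of $\mathbb{R}$ via $\abs{c} \le \sigma_w\sigma_z$, so that $s$ is real-valued everywhere --- a well-definedness point the paper's proof never addresses (the hypothesis $c \in [-\sigma_w\sigma_z, \sigma_w\sigma_z]$ is used nowhere in the paper's own argument). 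The paper's approach is marginally shorter to write down, but yours is the more self-contained and careful of the two.
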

\begin{proof}
The first derivative of $s(t)$ is
$$\pdv{s}{t} = \frac{\sigma_w^2 t - c}{\mu_w (\sigma_z^2 - 2 t c + t^2 \sigma_w^2)^{1/2}},$$
\noindent with $\pdv{s}{t} = 0$ having unique solution $t = c / \sigma_w^2$. 
The second derivative
$$ \pdv[2]{s}{t}\Bigg\rvert_{t = c / \sigma_w^2} = \frac{\sigma_w^2 \sigma_z^2 - c^2}{\mu_w (\sigma_z^2 - 2 t c + t^2 \sigma_w^2)^{3/2}}\Bigg\rvert_{t = c / \sigma_w^2} = \frac{\sigma_w^2 \sigma_z^2 - c^2}{\mu_w (\sigma_z^2(1 - \rho^2))^{3/2}} = \frac{\sigma_w^2}{\mu_w \sigma_z \sqrt{1 - \rho^2}} > 0$$
from which we deduce $t = c / \sigma_w^2$ minimises $s(t)$.
\end{proof}

\begin{corollary}[$O(n^{-1/2})$ scaling]
\label{cor:o_n_half_bound}
If in the assumptions of Theorem~\ref{thm:ratio_sum_bernoulli} we further impose the following:
\begin{itemize}
    \item $b_i = 1$ for all $i \in [n]$
    \item $a_i = 0$ for all $i \in [n]$, apart from a constant fraction $r \in (0, 1)$ of $a_i$ that are equal to one, that is, $n_a = \lfloor r n \rfloor$.
    \item $p_i = p$ for all $i \in [n]$
\end{itemize}
then we have that the ratio distribution $\widetilde{G}(t)
 = \mathbb{P} (\widetilde{Z} / \widetilde{W} \leq t)$ satisfies $\mathrm{d}_{\textsf{KS}}(\widetilde{G}, \mathcal{N}(\mu, \sigma^2) ) \sim O(n^{-1/2})$ as $n$ grows large.
\end{corollary}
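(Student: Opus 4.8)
The plan is to substitute the three simplifying assumptions directly into the finite-sample bound of Theorem~\ref{thm:ratio_sum_bernoulli} and track the leading-order growth in $n$ of every quantity appearing there. Under the assumptions we have $b^* = 1$, $a_* = 1$, $v_* = p(1-p)$ and $n_a = \lfloor rn\rfloor = rn + O(1)$, so the first summand $\tfrac{C_0}{\sqrt{n_a v_*}}\tfrac{1 + b^*}{a_*}$ is immediately seen to be $\tfrac{2C_0}{\sqrt{rp(1-p)}}\, n^{-1/2}(1 + o(1)) = O(n^{-1/2})$. The real work lies in the second (ratio-of-Gaussians) summand $\sqrt{2/\pi}\,\tfrac{\sigma_w^2(\abs{\mu_z} + \sigma_z^2) + \mu_w^2}{\sigma\mu_w^3}$.

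First I would record the leading-order behaviour of the five summary statistics. With $p_i = p$, $b_i = 1$, and exactly $n_a \sim rn$ of the $a_i$ equal to one, one gets $\mu_z = \alpha + n_a p \sim rnp$, $\mu_w = \beta + np \sim np$, $\sigma_z^2 = n_a p(1-p) \sim rnp(1-p)$, $\sigma_w^2 = np(1-p)$, and $\rho = n_a p(1-p)/(\sigma_z\sigma_w) = \sqrt{n_a/n} \to \sqrt r$ (the additive constants $\alpha,\beta$ are $n$-independent and contribute only $1+O(1/n)$ factors). Thus $\mu_z,\mu_w,\sigma_z^2,\sigma_w^2$ are all $\Theta(n)$ while $\rho$ tends to a positive constant. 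Feeding these into the numerator of the second term gives $\sigma_w^2(\abs{\mu_z}+\sigma_z^2) + \mu_w^2 = \Theta(n)\cdot\Theta(n) + \Theta(n^2) = \Theta(n^2)$, and $\mu_w^3 = \Theta(n^3)$.

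The crux — and the step I expect to be the main obstacle — is establishing that $\sigma = \Theta(n^{-1/2})$, which requires a genuine \emph{lower} bound on $\sigma$ because it sits in the denominator of the second summand. Writing $\sigma^2 = \mu^2\big(\tfrac{\sigma_z^2}{\mu_z^2} + \tfrac{\sigma_w^2}{\mu_w^2} - \tfrac{2\rho\sigma_z\sigma_w}{\mu_z\mu_w}\big)$ with $\mu = \mu_z/\mu_w \to r$, each squared coefficient of variation is individually $\Theta(n^{-1})$, so a naive count only yields $\sigma^2 = O(n^{-1})$ and leaves open the possibility of a cancellation that makes $\sigma$ decay faster. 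Substituting the leading orders, the bracketed factor equals $\tfrac{1-p}{np}\big(\tfrac1r + 1 - 2\big)(1 + o(1)) = \tfrac{1-p}{np}\cdot\tfrac{1-r}{r}(1 + o(1))$, whence $\sigma^2 = \tfrac{r(1-r)(1-p)}{p}\,n^{-1}(1 + o(1))$. Since $r\in(0,1)$ and $p\in(0,1)$ the coefficient $r(1-r)(1-p)/p$ is strictly positive, so the $\Theta(n^{-1})$ terms do \emph{not} cancel and $\sigma = \Theta(n^{-1/2})$ exactly.

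Finally I would combine these estimates: the second summand is $\Theta(n^2)\big/\big(\Theta(n^{-1/2})\cdot\Theta(n^3)\big) = \Theta(n^2)/\Theta(n^{5/2}) = \Theta(n^{-1/2})$. Both summands of the Theorem~\ref{thm:ratio_sum_bernoulli} bound are therefore $O(n^{-1/2})$, giving $\mathrm{d}_{\textsf{KS}}(\widetilde{G},\mathcal{N}(\mu,\sigma^2)) = O(n^{-1/2})$. As a coherence check one should also confirm that the two hypotheses of Theorem~\ref{thm:ratio_gauss} hold in the limit — namely $\mu_w/\sigma_w \sim \sqrt{np/(1-p)} \to \infty$ and $2\abs{\mu_z}\,\abs{\sigma_w/\mu_w - \sigma_z/\mu_z} = \Theta(\sqrt n) \to \infty$ — so that the invoked ratio bound is legitimately applicable for all sufficiently large $n$.
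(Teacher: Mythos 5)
Your proof is correct, and its skeleton is the same as the paper's: substitute the simplified parameters into the two-term bound of Theorem~\ref{thm:ratio_sum_bernoulli}, dispose of the Berry--Esseen term as $O(n^{-1/2})$, and then control the ratio-of-Gaussians term, whose only delicate ingredient is a \emph{lower} bound on $\sigma$. Where you genuinely diverge is in how that lower bound is obtained. You compute the asymptotics of $\sigma^2$ exactly, expanding the three coefficient-of-variation terms and verifying that the near-cancellation leaves a strictly positive coefficient, $\sigma^2 = \tfrac{r(1-r)(1-p)}{p}\,n^{-1}(1+o(1))$, hence $\sigma = \Theta(n^{-1/2})$. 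The paper instead invokes its Lemma~\ref{lem:s_t_min}: the function $s(t) = \mu_w^{-1}\sqrt{\sigma_z^2 - 2tc + t^2\sigma_w^2}$ is minimised at $t = c/\sigma_w^2$, so $\sigma = s(\mu) \geq \mu_w^{-1}\sqrt{\sigma_z^2(1-\rho^2)}$ -- a one-line structural bound that rules out cancellation without any expansion, since $\rho^2 = n_a/n \approx r < 1$. Both routes are legitimate; yours buys the exact leading constant, the paper's buys brevity and a reusable inequality. Note, however, that your accounting of the second summand is the more reliable one: evaluating the Lemma~\ref{lem:s_t_min} bound correctly gives $\mu_w^{-1}\sqrt{n_a p(1-p)(1-n_a/n)} = \Theta(n^{-1/2})$, in agreement with you, whereas the paper's displayed simplification of that same quantity, $\tfrac{n_a}{np}\sqrt{p(1-p)(1-1/n^2)} = \Theta(1)$, contains an algebra slip (it effectively replaces $1-\rho^2 \approx 1-r$ by $1-1/n^2$), which is why the paper reports the ratio-of-Gaussians term as $O(n^{-1})$ while its true order, as you find, is $\Theta(n^{-1/2})$. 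The slip is harmless for the corollary -- the Berry--Esseen term is $\Theta(n^{-1/2})$ regardless, so the stated $O(n^{-1/2})$ rate stands -- but your version of this step is the correct one. Your closing verification that the two hypotheses of Theorem~\ref{thm:ratio_gauss} hold for large $n$ (namely $\mu_w/\sigma_w \to \infty$ and $2\lvert\mu_z\rvert\,\lvert\sigma_w/\mu_w - \sigma_z/\mu_z\rvert = \Theta(\sqrt{n})$) is also a point in your favour: the paper applies that bound without checking them.
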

\begin{proof}
We shall directly use the bounds of Theorem~\ref{thm:ratio_sum_bernoulli}.
We have that
\begin{equation}\label{eq:cor_vars}
\begin{aligned}
\mu_z &= \alpha + n_a p, \quad  \sigma_z^2 = n_a p (1 - p), \\
\mu_w &= \beta + n p, \quad  \sigma_w^2 = n p (1 - p), \\
\rho &= \textstyle \frac{1}{\sqrt{n n_a} p (1 - p)} n_a p (1 - p), \\
a_* &= b^* = 1  \quad v_* = p (1 - p)
\end{aligned}
\end{equation}
Moreover,
\begin{align}
\sigma &\geq \mu_w^{-1} \sqrt{\sigma_z^2 - \frac{c^2}{\sigma_w^2}} \tag{Lemma~\ref{lem:s_t_min}} \\
&=\frac{n_a}{n p} \sqrt{p (1 - p) \qty(1 - \frac{1}{n^2})} \geq \frac{r (n -1)}{n p} \sqrt{p (1 - p) \qty(1 - \frac{1}{n^2})} \tag{Using Eq.~\eqref{eq:cor_vars}}
\end{align}
Therefore, we have
\begin{equation}\label{eq:cor_bound_1}
\frac{1}{\sigma \mu_w^3} \leq \frac{1}{(\beta + n p)^3} \frac{n p}{r (n - 1)} \qty(p (1 - p) \qty(1 - \frac{1}{n^2}))^{-\frac{1}{2}} \underset{n \to \infty}{\longrightarrow} \frac{1}{n^3 p^2 r} \qty(p (1 - p) )^{-\frac{1}{2}}
\end{equation}
Moreover, we have
\begin{align}\label{eq:cor_bound_2}
    \sigma_w^2 (\abs{\mu_z} + \sigma_z^2) + \mu_w^2 &= n p (1 - p) \qty[ \lfloor n r\rfloor p + \lfloor n r \rfloor p (1 - p) ] + n^2 p^2 (1 - p)^2 \notag \\
    &\leq n^2 p^2 r + n^2 p^2(1 - p)^2 (1 + r) \\
    &\leq 3 n^2 p^2 \tag{$0 < r, p < 1$}
\end{align}
Eqs.~\eqref{eq:cor_bound_1}~and~\eqref{eq:cor_bound_2} combined imply
\begin{equation}\label{eq:cor_bound_3}
    \frac{\sigma_w^2 (\abs{\mu_z} + \sigma_z^2) + \mu_w^2}{\sigma \mu_w^3} \underset{n \to \infty}{\longrightarrow} O(n^{-1}).
\end{equation}

Direct substitution of Eqs.~\eqref{eq:cor_bound_3}~and~\eqref{eq:cor_vars} into Theorem~\ref{thm:ratio_sum_bernoulli} gives
\begin{align}
\mathrm{d}_{\textsf{KS}}(F_{\widetilde{Z} / \widetilde{W}}, \mathcal{N}(\mu, \sigma^2) ) \leq 
\frac{2C_0}{\sqrt{r n p (1 - p)}} + O(n^{-1}) \underset{n \to \infty}{\longrightarrow} O(n^{-1/2}),
\end{align}
which follows since $C_0$, $r$ and $p$ are constants.
\end{proof}

\subsection{Robustness}

\subsubsection{Proof of Theorem~\ref{thm:robustness}}
\robustness*
\begin{proof}
We shall demonstrate that if $\mathbb{V}[P_i] < v_*$, then $X_i \sim \mathcal{B}(p_i)$ for all $i \in [n]$ which is the exact setting for Theorem~\ref{thm:ratio_sum_bernoulli}.

A $\operatorname{Beta}(\alpha, \beta)$-distributed random variable $P$ can be alternatively specified by its expectation and variance, whence
\begin{equation}\label{eq:beta_dist_moments}
\alpha = \qty( \frac{\mathbb{E}[P](1 - \mathbb{E}[P])}{\mathbb{V}[P]} - 1 ) \mathbb{E}[P], \quad \beta = \qty( \frac{\mathbb{E}[P](1 - \mathbb{E}[P])}{\mathbb{V}[P]} - 1 ) (1 - \mathbb{E}[P]).
\end{equation}
Suppose we impose that $\mathbb{E}[P] = p$ for some $p \in (0, 1)$ and let $\mathbb{V}[P] = v$.
We thus have from~\eqref{eq:beta_dist_moments} that
\begin{equation}\label{eq:beta_dist_moments_2}
    \alpha = \qty( \frac{p(1 - p)}{v} - 1) p, \quad \beta = \qty( \frac{p(1 - p)}{v} - 1) (1 - p).  
\end{equation}
A $\operatorname{Beta}(\alpha, \beta)$ distribution is well-formed when both $\alpha, \beta > 0$.
Substitution into~\eqref{eq:beta_dist_moments_2} then specifies that $P$ follows a valid $\operatorname{Beta}(\alpha, \beta)$ distribution when 
$$\qty( \frac{p(1 - p)}{v} - 1) > 0 \Longleftrightarrow p(1 - p) > v.$$
By definition, $v_{*} = \min_i \{p_i (1 - p_i)\}$ and so we have that the distributions $P_i \sim \operatorname{Beta}(\alpha_i, \beta_i)$ are all well formed when $\mathbb{V}[P_i] < v_*$.

It is well-known that the distribution $Y \sim \mathcal{B}(P)$, where $P \sim \operatorname{Beta}(\alpha, \beta)$ is identical to $Y \sim \mathcal{B}(\frac{\alpha}{\alpha + \beta})$.
We also have that $\mathbb{E}[P] = \frac{\alpha}{\alpha + \beta}$. 
By assumption, we impose that $\mathbb{E}[P_i] = p_i$ for all $i \in [n]$, giving that $\frac{\alpha_i}{\alpha_i + \beta_i} = p_i$.
Thus, $Y_i \sim \mathcal{B}(p_i)$ which is precisely the setup of Theorem~\ref{thm:ratio_sum_bernoulli}.
\end{proof}

\begin{remark}
Theorem~\ref{thm:robustness} is a relaxation of Theorem~\ref{thm:ratio_sum_bernoulli} because for $P_i \sim \operatorname{Beta}(\alpha_i, \beta_i)$ such that $\mathbb{E}[P_i] = p_i$, in the limit $\mathbb{V}[P_i] \to 0$, we have that $P_i \sim \mathds{1}\{P_i = p_i\}$.
\end{remark}

\subsection{Estimators, Mean and Covariance for ROC-AUC}

The probability query corresponding to ROC-AUC is given in~\eqref{eq:roc_auc_query} as
$$P(\hat{y}(X) \geq \hat{y}(X') \mid Y = 1,\ Y' = 0).$$
We can use the definition of conditional expectation to rewrite~\eqref{eq:roc_auc_query} as
\begin{equation}\label{eq:roc_auc_query_frac}
    \frac{P(\hat{y}(X) \geq \hat{y}(X'),\ Y = 1,\ Y' = 0)}{P(Y = 1,\ Y' = 0)}.
\end{equation}

Suppose $n \in \mathbb{N}$ and we have an index set $[n] := \{1, \ldots, n\}$ for evaluation set $D_n = \{\langle x_i, y_i, m_i \rangle \}_{i \in [n]}$.
Assuming briefly the vanilla case where $m_i = 0$ for all $i \in [n]$, we can substitute in the usual estimators into~\eqref{eq:roc_auc_query_frac} for the ratio like so
\begin{equation}\label{eq:roc_auc_estimator}
\widehat{Q}_{\text{roc-auc}, n} = \frac{\widehat{P}(\hat{y}(X) \geq \hat{y}(X'),\ Y = 1,\ Y' = 0)}{\widehat{P}(Y = 1,\ Y' = 0)} = \frac{\sum_{i \in [n]} \sum_{j \in [n]} \mathds{1}\{\hat{y}(x_i) \geq \hat{y}(x_j) \} \mathds{1}\{y_i = 1\}  \mathds{1}\{y_j = 0\} }{\sum_{i \in [n]} \sum_{j \in [n]}  \mathds{1}\{y_i = 1\}  \mathds{1}\{y_j = 0\}}
\end{equation}

Now we allow some missing labels $m_i = 1 \Leftrightarrow y_i = \NA$.
We have a partition into known $\mathcal{K} \subset [n]$ and unknown $\overline{\mathcal{K}} \subset [n]$ indices, that is, $\mathcal{K} \cap \overline{\mathcal{K}} = \emptyset$, $\mathcal{K} \cup \overline{\mathcal{K}} = [n]$.
Define the random variables $Y_i$ for $i \in [n]$ such that
\begin{equation}\label{eq:roc_auc_bern}
    Y_i \sim \begin{cases} \mathcal{B}(p_i), & i \in \overline{\mathcal{K}}; \\ \mathds{1}\{Y_i = y_i\}, & i \in \mathcal{K}, \end{cases}
\end{equation}
and $\mathcal{B}(p)$ is a Bernoulli distribution with parameter $p \in (0, 1)$ and the $y_i \in \{0, 1\}$ are constants for $i \in \mathcal{K}$. 

The estimator for ROC-AUC when labels are missing now becomes
\begin{equation}\label{eq:roc_auc_estimator_missing}
    \widehat{Q}_{\text{roc-auc}, n}^{(S)} =   \frac{\sum_{i \in [n]} \sum_{j \in [n]} \mathds{1}\{\hat{y}(x_i) \geq \hat{y}(x_j) \} Y_i (1 - Y_j)}{\sum_{i \in [n]} \sum_{j \in [n]} Y_i (1 - Y_j)}
\end{equation}

We shall require Lemma~\ref{lem:roc_auc_mean_cov} for the mean and covariance of $\widehat{Q}_{\text{roc-auc}, n}^{(S)}$. 
\begin{lemma}\label{lem:roc_auc_mean_cov}
Let $\xi_{i, j}, \zeta_{i,j} \in \mathbb{R}$ for all $i, j \in [n]$. Then

$$\mathbb{E}\left[ \sum_{i,j \in [n]} \xi_{i, j} Y_i (1 - Y_j) \right] = \sum_{i,j \in [n]} \xi_{i, j} \omega_{i, j},$$

where 

$$\omega_{i, j} =: \mathbb{E}[Y_i (1 - Y_j)] = \begin{cases} 
    y_i(1 - y_j), & (i \neq j) \wedge (i, j \in \mathcal{K}); \\
    p_i(1 - y_j), & (i \neq j) \wedge (i \in \overline{\mathcal{K}},\ j \in \mathcal{K}); \\
    y_i(1 - p_j), & (i \neq j) \wedge (i \in \mathcal{K},\ j \in \overline{\mathcal{K}}); \\
    p_i(1 - p_j), & (i \neq j) \wedge (i, j \in \overline{\mathcal{K}}); \\
    0, & \text{otherwise}.
\end{cases}$$

Moreover,
$$\operatorname{Cov}\left[ \sum_{i,j \in [n]} \xi_{i, j} Y_i (1 - Y_j) , \sum_{i,j \in [n]} \zeta_{i, j} Y_i (1 - Y_j)  \right] 
    = \sum_{i, j, k ,l \in [n]} \xi_{i, j} \zeta_{k, l} \cdot \left( \nu_{i, k} - \Gamma_{i, j, k} - \Gamma_{k, l, i} + \eta_{i, j, k, l}  \right),$$

where

$$\nu_{i, j} =: \mathbb{E}[Y_i Y_j] - \mathbb{E}[Y_i]\mathbb{E}[Y_j] = \begin{cases}     
    p_i(1 - p_i), & (i = j) \wedge (i, j \in \overline{\mathcal{K}}); \\
    0, & \text{otherwise}
\end{cases}$$

and

$$\Gamma_{i,j,k} =: \mathbb{E}[Y_i Y_j Y_k] - \mathbb{E}[Y_i Y_j]\mathbb{E}[Y_k] = \begin{cases}
    p_i (1 - p_i), & (i = j = k) \wedge (i, j, k \in \overline{\mathcal{K}}); \\
    p_i p_j (1 - p_i), & (i = k \neq j) \wedge (i, j, k \in \overline{\mathcal{K}}); \\
    p_i p_j (1 - p_j), & (k = j \neq i) \wedge (i, j, k \in \overline{\mathcal{K}}); \\
    y_i p_k (1 - p_k), & ( j = k \neq i) \wedge (i \in \mathcal{K},\ j, k \in \overline{\mathcal{K}}); \\
    y_j p_k (1 - p_k), & ( i = k \neq j) \wedge (j \in \mathcal{K},\ i, k \in \overline{\mathcal{K}}); \\
    0, & \text{otherwise}
\end{cases}
$$

and 

$$\eta_{i, j, k, l} =: \mathbb{E}[Y_i Y_j Y_k Y_l] - \mathbb{E}[Y_i Y_j] \mathbb{E}[Y_k Y_l] = \begin{cases}
    p_i(1 - p_i), & (i = j = k = l) \wedge (i, j, k, l \in \overline{\mathcal{K}}); \\
    p_i p_l (1 - p_i), & (i=j=k\neq l) \wedge (i, j, k, l \in \overline{\mathcal{K}}); \\
    p_i p_k (1 - p_i), & (i=j=l\neq k) \wedge (i, j, k, l \in \overline{\mathcal{K}}); \\
    p_i p_j (1 - p_i), & (i=k=l\neq j) \wedge (i, j, k, l \in \overline{\mathcal{K}}); \\
    p_i p_j (1 - p_j), & (j=k=l\neq i) \wedge (i, j, k, l \in \overline{\mathcal{K}}); \\
    p_i p_j (1 - p_i p_j), & (i=k\neq j=l) \wedge (i, j, k, l \in \overline{\mathcal{K}}); \\
    p_i p_j (1 - p_i p_j), & (i=l\neq j=k) \wedge (i, j, k, l \in \overline{\mathcal{K}}); \\
    p_i p_j p_l (1 - p_i), & (i=k\neq j\neq l \neq i) \wedge (i, j, k, l \in \overline{\mathcal{K}}); \\
    p_i p_j p_k (1 - p_j), & (j=l\neq i \neq k \neq j) \wedge (i, j, k, l \in \overline{\mathcal{K}}); \\
    p_i p_j p_k (1 - p_i), & (i=l\neq j \neq k \neq i) \wedge (i, j, k, l \in \overline{\mathcal{K}}); \\
    p_i p_j p_l (1 - p_j), & (j=k \neq i\neq l \neq j)  \wedge (i, j, k, l \in \overline{\mathcal{K}}); \\
    y_i \Gamma_{k, l, j}, & i \in \mathcal{K}; \\
    y_j \Gamma_{k, l, i}, & j \in \mathcal{K}; \\
    y_k \Gamma_{i, j, l}, & k \in \mathcal{K}; \\
    y_l \Gamma_{i, j, k}, & l \in \mathcal{K}; \\
    0, & \text{otherwise}.
\end{cases}
$$
\end{lemma}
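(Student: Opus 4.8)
The plan is to establish the mean by linearity of expectation and the covariance by bilinearity, reducing both to elementary mixed-moment computations of the family $\{Y_i\}$. Two structural facts drive everything: idempotence, $Y_i^2 = Y_i$ since $Y_i \in \{0,1\}$, and mutual independence of distinct $Y_i$ (with $Y_i = y_i$ a deterministic constant when $i \in \mathcal{K}$), so that $\mathbb{E}[Y_i] = p_i$ for $i \in \overline{\mathcal{K}}$ and $\mathbb{E}[Y_i] = y_i$ for $i \in \mathcal{K}$. For the mean, linearity gives $\mathbb{E}[\sum_{i,j}\xi_{i,j}Y_i(1-Y_j)] = \sum_{i,j}\xi_{i,j}\omega_{i,j}$ with $\omega_{i,j} = \mathbb{E}[Y_i(1-Y_j)]$. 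When $i=j$, idempotence forces $Y_i(1-Y_i)\equiv 0$, so $\omega_{i,i}=0$; when $i\neq j$, independence factorizes $\omega_{i,j} = \mathbb{E}[Y_i]\,(1-\mathbb{E}[Y_j])$, and substituting $\mathbb{E}[Y_i]\in\{y_i,p_i\}$ and $\mathbb{E}[Y_j]\in\{y_j,p_j\}$ according to membership in $\mathcal{K}$ or $\overline{\mathcal{K}}$ yields the four stated branches, with the $i=j$ case absorbed into ``otherwise''.

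For the covariance, bilinearity reduces the claim to evaluating $\operatorname{Cov}[Y_i(1-Y_j),\,Y_k(1-Y_l)]$ for each fixed tuple $(i,j,k,l)$. I would expand $(1-Y_j)(1-Y_l) = 1 - Y_j - Y_l + Y_jY_l$, subtract $\mathbb{E}[Y_i(1-Y_j)]\mathbb{E}[Y_k(1-Y_l)]$, and regroup the eight resulting terms into the four centered blocks
$$\big(\mathbb{E}[Y_iY_k] - \mathbb{E}[Y_i]\mathbb{E}[Y_k]\big) - \big(\mathbb{E}[Y_iY_jY_k] - \mathbb{E}[Y_iY_j]\mathbb{E}[Y_k]\big) - \big(\mathbb{E}[Y_iY_kY_l] - \mathbb{E}[Y_i]\mathbb{E}[Y_kY_l]\big) + \big(\mathbb{E}[Y_iY_jY_kY_l] - \mathbb{E}[Y_iY_j]\mathbb{E}[Y_kY_l]\big).$$
The first block is $\nu_{i,k}$, the second $-\Gamma_{i,j,k}$, the fourth $\eta_{i,j,k,l}$, and the third equals $-\Gamma_{k,l,i}$ since $\mathbb{E}[Y_iY_kY_l] - \mathbb{E}[Y_i]\mathbb{E}[Y_kY_l]$ is the integrand of $\Gamma_{k,l,i}$ rewritten by symmetry. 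This yields exactly $\nu_{i,k} - \Gamma_{i,j,k} - \Gamma_{k,l,i} + \eta_{i,j,k,l}$, establishing the decomposition; what remains is to tabulate $\nu$, $\Gamma$, $\eta$ explicitly.

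Each of these is a centered mixed moment that I would evaluate by case analysis on the coincidence pattern of the indices crossed with their $\mathcal{K}/\overline{\mathcal{K}}$ membership. The two moves in every case are idempotence, which collapses repeated indices (e.g.\ $\mathbb{E}[Y_i^2 Y_k] = \mathbb{E}[Y_iY_k]$), and independence, which factorizes across distinct ones. For $\nu$ this is immediate: $\nu_{i,k} = \operatorname{Var}[Y_i] = p_i(1-p_i)$ when $i=k\in\overline{\mathcal{K}}$ and $0$ otherwise (independence or a constant factor killing the covariance). For $\Gamma$ and $\eta$ the enumeration is heavier, but each branch is a one-line evaluation once the index partition and memberships are fixed. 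A useful structural shortcut that shortens all the $\mathcal{K}$-cases is that a known index contributes a constant factor pulled out of both terms of the centered moment; concretely $i\in\mathcal{K}$ gives $\eta_{i,j,k,l} = y_i\big(\mathbb{E}[Y_jY_kY_l] - \mathbb{E}[Y_j]\mathbb{E}[Y_kY_l]\big) = y_i\Gamma_{k,l,j}$, which is precisely the stated recursive reduction of the four-index moments to three-index ones, and similarly for $j,k,l\in\mathcal{K}$.

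The main obstacle is not analytic but combinatorial: ensuring the listed branch conditions for $\eta$ are \emph{exhaustive and mutually non-overlapping}, so that every pattern of the four indices $(i,j,k,l)$ over the partition lattice, combined with the $\mathcal{K}/\overline{\mathcal{K}}$ assignments, falls into exactly one case (with the all-$\overline{\mathcal{K}}$ coincidence patterns handled first, the single-known-index reductions next, and everything else collapsing to $0$ by independence after idempotent simplification). Verifying this partition of cases—and checking the precedence of the $y_i\Gamma$ reductions against the all-$\overline{\mathcal{K}}$ rows—is the only delicate bookkeeping; each individual moment evaluation is routine.
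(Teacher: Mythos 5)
Your proof is correct and follows essentially the same route as the paper, whose own proof is a one-line appeal to linearity of expectation, bilinearity of covariance, the Bernoulli moment formulas, and independence of distinct $Y_i$. Your expansion of $\operatorname{Cov}[Y_i(1-Y_j),\,Y_k(1-Y_l)]$ into the four centered blocks $\nu_{i,k}-\Gamma_{i,j,k}-\Gamma_{k,l,i}+\eta_{i,j,k,l}$, together with the idempotence/independence case analysis and the reduction $\eta_{i,j,k,l}=y_i\Gamma_{k,l,j}$ for known indices, is precisely the routine verification the paper leaves implicit.
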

\begin{proof}
Follows readily from the following: linearity of expectation; covariance of linear combinations; the fact that $\mathbb{E}[Y] = p$, $\mathbb{V}[Y] = p(1 - p)$ for $Y \sim \mathcal{B}(p)$; $\operatorname{Cov}[U, V] = 0$ for independent r.v.s $U, V$.   

\end{proof}

\begin{remark}[ROC-AUC Estimator]\label{rem:roc_auc_mean_cov}
We can substitute $\xi_{i, j} = \mathds{1}\{\hat{y}(x_i) \geq \hat{y}(x_j) \}$ and $\zeta_{i, j} = 1$ into Lemma~\ref{lem:roc_auc_mean_cov} to recover the mean and covariance matrix of the numerator and denominator of the estimator $\widehat{Q}_{\text{roc-auc}, n}^{(S)}$.
The mean and variance of the ratio of 
Gaussians computed using Theorem~\ref{thm:ratio_gauss} can then be readily used with PEMI-Gauss algorithm.
\end{remark}

We note that computing the covariance matrix is intractable for large $n$, since there are $O(n^4)$ components in the sum.
In practise, for $n > 120$ we use PEMI with $B = 10000$ and empirically compute $\hat{\mu}$ and $\widehat{\sigma^2}$ from the returned distribution for use within PEMI-Gauss.

\clearpage

\section{Supplementary Empirical Results}

All experiments were conducted using an AWS EC2 \texttt{c5a.12xlarge} instance and datasets were downloaded using the OpenML API~\citep{openml}.

\label{sec:appendix_expt}
\subsection{MCAR Regime}
\label{sec:mcar_tables}

\subsubsection{Missingness Fraction $p_m = 0.1$}

\begin{table}[H]
\centering
        {\scriptsize
        \input{tables/ks_df_conf_p_0.1_alpha_0.9.tex}
        }
        \caption{KS-distance of PIT for different algorithms in MCAR setting (all datasets). 90\% confidence intervals. $p_m = 0.1$. Best results in \textbf{bold}.}
        \label{tab:ks_tab_p_0.1_conf}
\end{table}
\afterpage{\FloatBarrier}

\begin{table}[H]
\centering
        {\scriptsize
        \input{tables/W_1_df_conf_p_0.1_alpha_0.9.tex}
        }
        \caption{$W_1$ distance of PIT for different algorithms in MCAR setting (all datasets). 90\% confidence intervals. $p_m = 0.1$. Best results in \textbf{bold}.}
        \label{tab:W_1_tab_p_0.1_conf}
\end{table}
\afterpage{\FloatBarrier}

\begin{table}[H]
\centering
        {\scriptsize
        \input{tables/df_mae_conf_p_0.1_alpha_0.9.tex}
        }
        \caption{MAE of centre of evaluation metric predictive distributions (all datasets), MCAR setting. 90\% confidence intervals. $p_m = 0.1$. Best results in \textbf{bold}.}
        \label{tab:mae_tab_p_0.1_conf}
\end{table}
\afterpage{\FloatBarrier}

\begin{table}[H]
\centering
        {\scriptsize
        \input{tables/df_mse_conf_p_0.1_alpha_0.9.tex}
        }
        \caption{RMSE of centre of evaluation metric predictive distributions (all datasets), MCAR setting. 90\% confidence intervals. $p_m = 0.1$. Best results in \textbf{bold}.}
        \label{tab:mse_tab_p_0.1_conf}
\end{table}
\afterpage{\FloatBarrier}

\subsubsection{Missingness Fraction $p_m = 0.2$}

\begin{table}[H]
\centering
        {\scriptsize
        \input{tables/ks_df_conf_p_0.2_alpha_0.9.tex}
        }
        \caption{KS-distance of PIT for different algorithms in MCAR setting (all datasets). 90\% confidence intervals. $p_m = 0.2$. Best results in \textbf{bold}.}
        \label{tab:ks_tab_p_0.2_conf}
\end{table}
\afterpage{\FloatBarrier}

\begin{table}[H]
\centering
        {\scriptsize
        \input{tables/W_1_df_conf_p_0.2_alpha_0.9.tex}
        }
        \caption{$W_1$ distance of PIT for different algorithms in MCAR setting (all datasets). 90\% confidence intervals. $p_m = 0.2$. Best results in \textbf{bold}.}
        \label{tab:W_1_tab_p_0.2_conf}
\end{table}
\afterpage{\FloatBarrier}

\begin{table}[H]
\centering
        {\scriptsize
        \input{tables/df_mae_conf_p_0.2_alpha_0.9.tex}
        }
        \caption{MAE of centre of evaluation metric predictive distributions (all datasets), MCAR setting. 90\% confidence intervals. $p_m = 0.2$. Best results in \textbf{bold}.}
        \label{tab:mae_tab_p_0.2_conf}
\end{table}
\afterpage{\FloatBarrier}

\begin{table}[H]
\centering
        {\scriptsize
        \input{tables/df_mse_conf_p_0.2_alpha_0.9.tex}
        }
        \caption{RMSE of centre of evaluation metric predictive distributions (all datasets), MCAR setting. 90\% confidence intervals. $p_m = 0.2$. Best results in \textbf{bold}.}
        \label{tab:mse_tab_p_0.2_conf}
\end{table}
\afterpage{\FloatBarrier}

\subsubsection{Missingness Fraction $p_m = 0.3$}

\begin{table}[H]
\centering
        {\scriptsize
        \input{tables/ks_df_conf_p_0.3_alpha_0.9.tex}
        }
        \caption{KS-distance of PIT for different algorithms in MCAR setting (all datasets). 90\% confidence intervals. $p_m = 0.3$. Best results in \textbf{bold}.}
        \label{tab:ks_tab_p_0.3_conf}
\end{table}
\afterpage{\FloatBarrier}

\begin{table}[H]
\centering
        {\scriptsize
        \input{tables/W_1_df_conf_p_0.3_alpha_0.9.tex}
        }
        \caption{$W_1$ distance of PIT for different algorithms in MCAR setting (all datasets). 90\% confidence intervals. $p_m = 0.3$. Best results in \textbf{bold}.}
        \label{tab:W_1_tab_p_0.3_conf}
\end{table}
\afterpage{\FloatBarrier}

\begin{table}[H]
\centering
        {\scriptsize
        \input{tables/df_mae_conf_p_0.3_alpha_0.9.tex}
        }
        \caption{MAE of centre of evaluation metric predictive distributions (all datasets), MCAR setting. 90\% confidence intervals. $p_m = 0.3$. Best results in \textbf{bold}.}
        \label{tab:mae_tab_p_0.3_conf}
\end{table}
\afterpage{\FloatBarrier}

\begin{table}[H]
\centering
        {\scriptsize
        \input{tables/df_mse_conf_p_0.3_alpha_0.9.tex}
        }
        \caption{RMSE of centre of evaluation metric predictive distributions (all datasets), MCAR setting. 90\% confidence intervals. $p_m = 0.3$. Best results in \textbf{bold}.}
        \label{tab:mse_tab_p_0.3_conf}
\end{table}
\afterpage{\FloatBarrier}

\subsection{MNAR Regime}
\subsubsection{Missingness Fraction $p_m = 0.1$}

\begin{figure}[H]
    \centering
    \includegraphics[width=0.99\textwidth]{./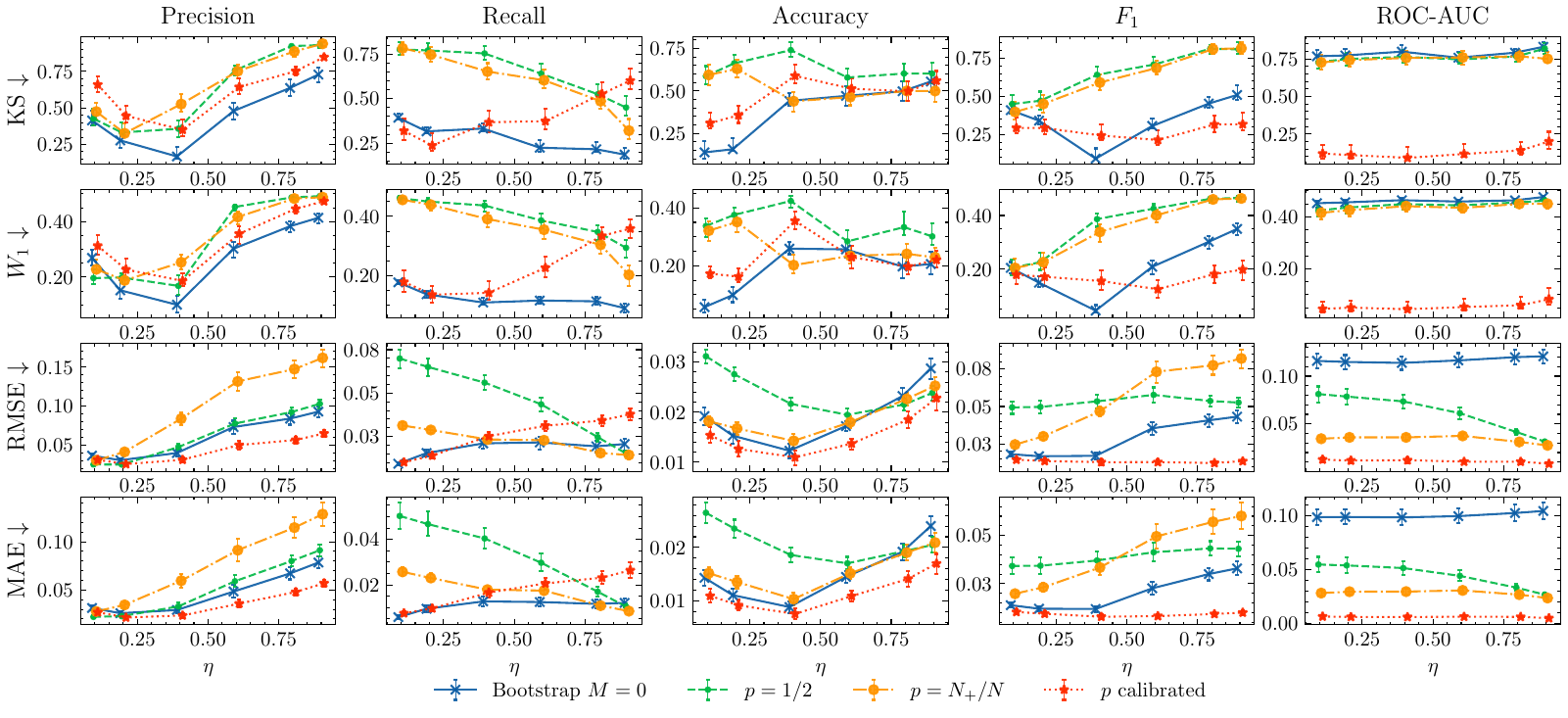}
    \caption{Effect of varying MNAR class imbalance $\eta \in (0, 1)$ on evaluation metric predictive distribution quality using PEMI-Gauss for $p_{m} = 0.1$. Each plot represents a metric $\widehat{Q}_n^{(S)}$. Error bars are bootstrapped confidence intervals at the $\alpha=0.9$ level. Quality is on $y$-axis with $\eta$ varying on the $x$-axis, with $x$-axis values slightly jittered to help separate each series visually. Each line corresponds to a predictive distribution, with $p$ algorithms using PEMI-Gauss.\vspace{12cm} }
    \label{fig:mnar_seq_plots_all_p_01}
\end{figure}
\afterpage{\FloatBarrier}

\subsubsection{Missingness Fraction $p_m = 0.2$}

\begin{figure}[H]
    \centering
    \includegraphics[width=0.99\textwidth]{./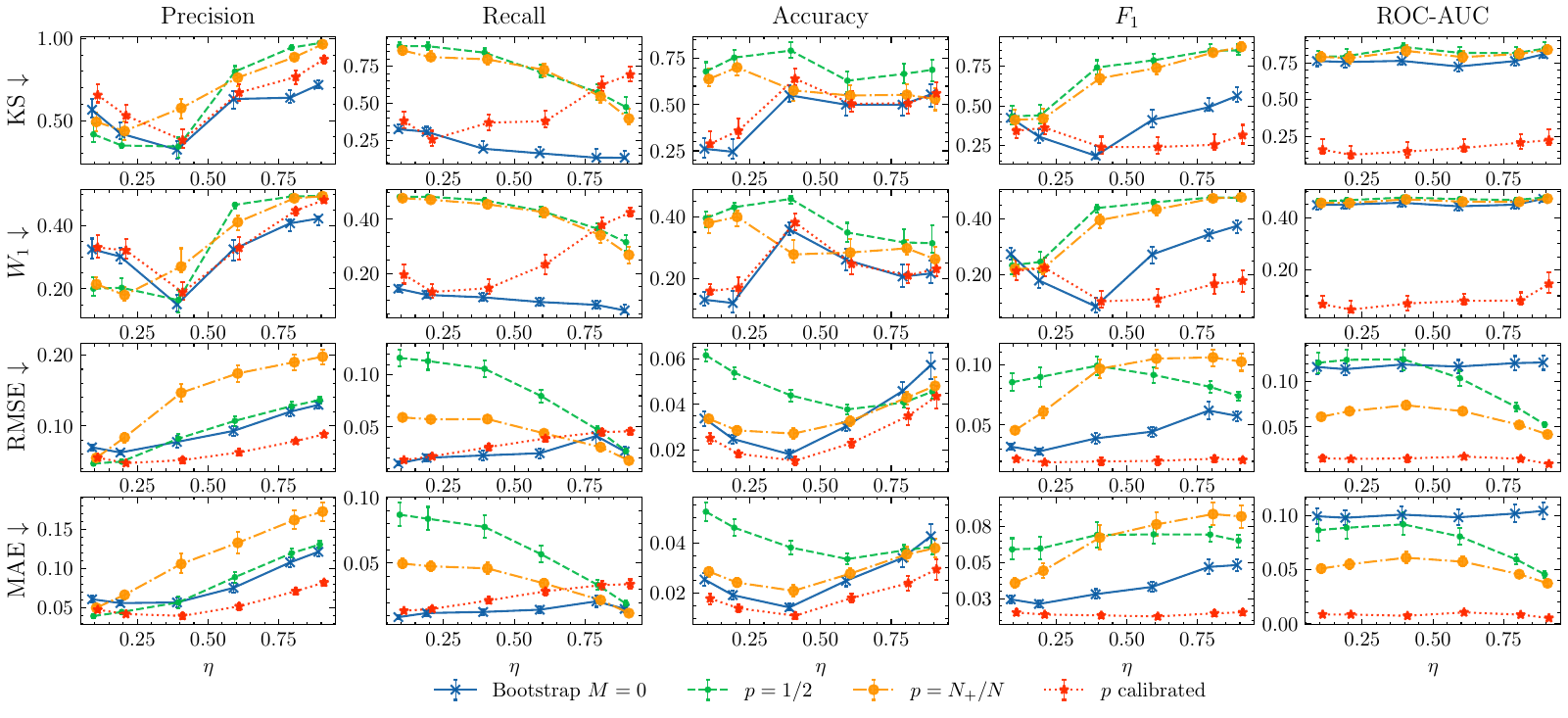}
    \caption{Effect of varying MNAR class imbalance $\eta \in (0, 1)$ on evaluation metric predictive distribution quality using PEMI-Gauss for $p_{m} = 0.2$. Each plot represents a metric $\widehat{Q}_n^{(S)}$. Error bars are bootstrapped confidence intervals at the $\alpha=0.9$ level. Quality is on $y$-axis with $\eta$ varying on the $x$-axis, with $x$-axis values slightly jittered to help separate each series visually. Each line corresponds to a predictive distribution, with $p$ algorithms using PEMI-Gauss. }
    \label{fig:mnar_seq_plots_all_p_02}
\end{figure}
\afterpage{\FloatBarrier}

\subsubsection{Missingness Fraction $p_m = 0.3$}

\begin{figure}[H]
    \centering
    \includegraphics[width=0.99\textwidth]{./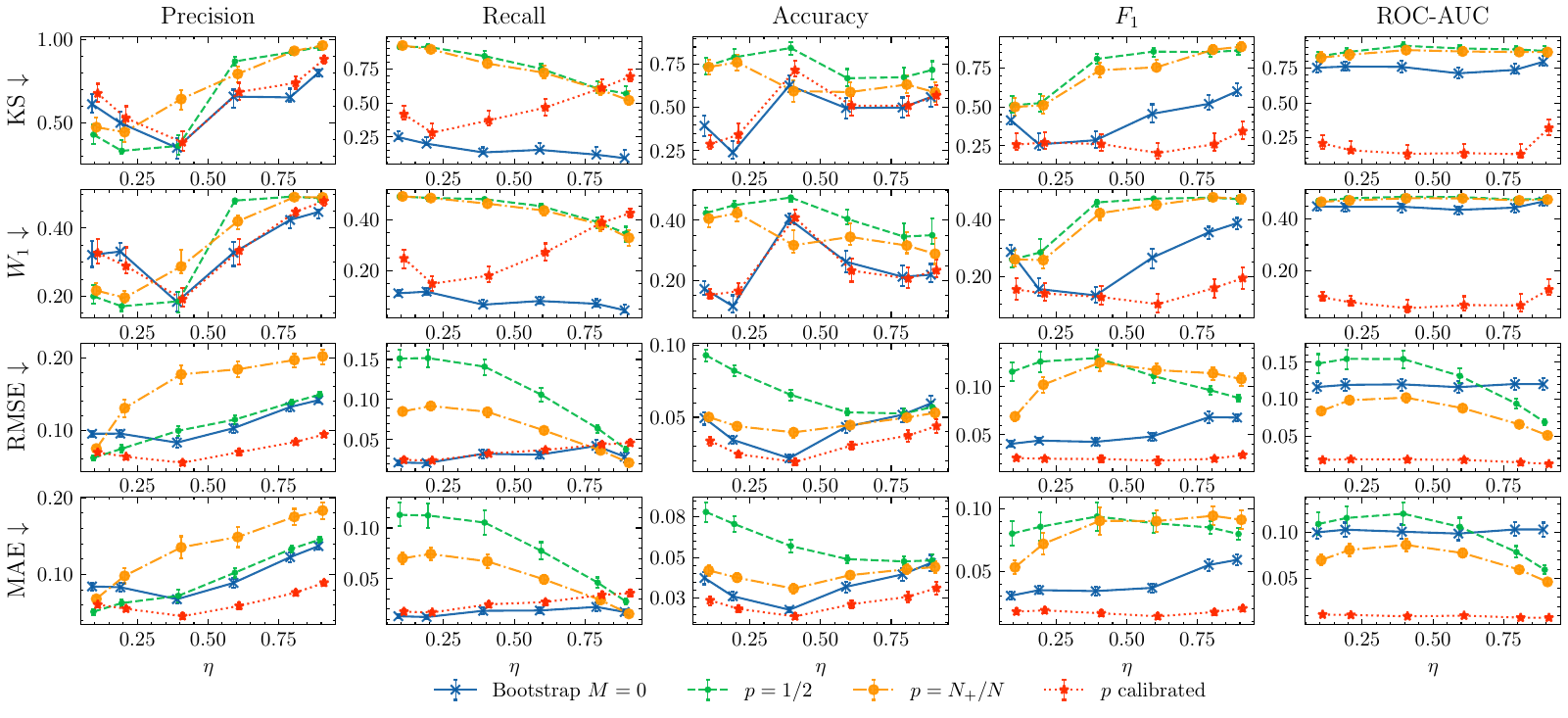}
    \caption{Effect of varying MNAR class imbalance $\eta \in (0, 1)$ on evaluation metric predictive distribution quality using PEMI-Gauss for $p_{m} = 0.3$. Each plot represents a metric $\widehat{Q}_n^{(S)}$. Error bars are bootstrapped confidence intervals at the $\alpha=0.9$ level. Quality is on $y$-axis with $\eta$ varying on the $x$-axis, with $x$-axis values slightly jittered to help separate each series visually. Each line corresponds to a predictive distribution, with $p$ algorithms using PEMI-Gauss. }
    \label{fig:mnar_seq_plots_all_p_03}
\end{figure}
\afterpage{\FloatBarrier}

\end{document}


%

%

\onecolumn
\aistatstitle{Instructions for Paper Submissions to AISTATS 2025: \\
Supplementary Materials}

\section{FORMATTING INSTRUCTIONS}

To prepare a supplementary pdf file, we ask the authors to use \texttt{aistats2025.sty} as a style file and to follow the same formatting instructions as in the main paper.
The only difference is that the supplementary material must be in a \emph{single-column} format.
You can use \texttt{supplement.tex} in our starter pack as a starting point, or append the supplementary content to the main paper and split the final PDF into two separate files.

Note that reviewers are under no obligation to examine your supplementary material.

\section{MISSING PROOFS}

The supplementary materials may contain detailed proofs of the results that are missing in the main paper.

\subsection{Proof of Lemma 3}

\textit{In this section, we present the detailed proof of Lemma 3 and then [ ... ]}

\section{ADDITIONAL EXPERIMENTS}

If you have additional experimental results, you may include them in the supplementary materials.

\subsection{The Effect of Regularization Parameter}

\textit{Our algorithm depends on the regularization parameter $\lambda$. Figure 1 below illustrates the effect of this parameter on the performance of our algorithm. As we can see, [ ... ]}

\vfill